\documentclass[12pt]{article}

\usepackage[margin=1in]{geometry}
\usepackage{hyperref}
\usepackage{mathrsfs}
\usepackage{enumitem}
\usepackage{amsmath}
\usepackage{amssymb}
\usepackage{mathtools}
\usepackage{amsthm}
\usepackage{float}
\usepackage{tabularx}
\usepackage{algorithm}
\usepackage{makecell}
\usepackage{algpseudocode}
\usepackage{microtype}
\usepackage{graphicx}
\usepackage{authblk}
\usepackage{array}
\usepackage{adjustbox}
\usepackage{subcaption}
\usepackage{longtable}
\usepackage{bm}
\usepackage[table]{xcolor}
\usepackage{booktabs}
\usepackage{placeins}
\usepackage[capitalize,noabbrev]{cleveref}

\usepackage{tikz}
\usetikzlibrary{calc}

\usepackage{siunitx}
\usepackage{amssymb}            
\usepackage{mathtools}          
\usepackage{mathrsfs,float}           
\usepackage{graphicx,cleveref}           
\usepackage{subcaption}         
\usepackage[space]{grffile}     
\usepackage{url}                
\usepackage{lipsum,hyperref}             

\usepackage{amsmath,amsthm,amssymb}
\usepackage{mathtools,systeme}
\usepackage{setspace,dsfont,hyperref}
\usepackage{color}

\newtheorem{thm}{Theorem}[section]

\newtheorem{lem}[thm]{Lemma}
\newtheorem{prop}[thm]{Proposition}
\newtheorem{ex}[thm]{Example}

\theoremstyle{definition}

\theoremstyle{definition}

\theoremstyle{remark}
\newtheorem{rem}{Remark}[section]

\DeclarePairedDelimiter\abs{\lvert}{\rvert}
\DeclarePairedDelimiter\norm{\lVert}{\rVert}
\newcommand{\im}{{\mathbf{i}}}

\algrenewcommand\algorithmicrequire{\textbf{Input:}}
\algrenewcommand\algorithmicensure{\textbf{Output:}}

\usepackage{titlesec}
\titleformat{\section}{\normalfont\bfseries\large}{\thesection}{1em}{}
\titleformat{\subsection}{\normalfont\bfseries}{\thesubsection}{1em}{}

\title{\bfseries Reinforcement Learning for Power-Flow Network Analysis}
\author[1]{Alperen A. Erg\"ur}
\author[2]{Julia Lindberg}
\author[3]{Vinny Miller}

\affil[1]{The University of Texas at San Antonio, Mathematics and Computer Science Departments}
\affil[2]{Georgia Tech, Mathematics Department}
\affil[3]{The University of Texas at San Antonio, Computer Science Department}

\date{}

\begin{document}
\maketitle
\begin{abstract}
The power flow equations are non-linear multivariate equations that describe the relationship between power injections and bus voltages of electric power networks. 
Given a network topology, we are interested in finding network parameters with many equilibrium points.
This corresponds to finding instances of the power flow equations with many real solutions. 
Current state-of-the art algorithms in computational algebra are not capable of answering this question for networks involving more than a small number of variables.  To remedy this, we design a probabilistic reward function that gives a good approximation to this root count, and a state-space that mimics the space of power flow equations.   We derive the average root count for a Gaussian model, and use this as a baseline for our RL agents.  The agents discover instances of the power flow equations with many more solutions than the average baseline.  This demonstrates the potential of RL for power-flow network design and analysis as well as the potential for RL to contribute meaningfully to problems that involve complex non-linear algebra or geometry. \footnote{Author order alphabetic, all authors  contributed equally.}
\end{abstract}

\section{Introduction}

\subsection{Power Flow Networks and Their Non-Linear Equations}

\subsubsection*{Power Networks and the Power Flow Problem} \label{subsubsec:power-flow-problem}

One of the greatest feats of modern engineering is the electrification of our cities across the globe. This was done through the construction and maintenance of electric power grids. The power flow equations are a set of nonlinear equations that describe how electricity behaves as it travels across these grids and are ubiquitous in all studies of electric power networks. Real solutions to the power flow equations give operating points for electric power grids. For a given power network, the problem of finding all real solutions to the power flow equations is called the power flow problem.

In real-world situations, engineers typically only need to find one real solution to the power flow equations. However, there are certain types of problems where finding multiple solutions is desirable. One important such problem is stability analysis, often referred to as dynamic security assessment. The dynamic security assessment (DSA) of a power system evaluates the system's ability to endure all credible contingencies under specific conditions. For direct DSA methods, the multiple real solutions of the power flow equations correspond to equilibrium points of the underlying dynamical system. These equilibrium points are partitioned into stable equilibrium points (SEPs) and unstable equilibrium points (UEPs). 

Stability is usually assessed using a small-signal linearized model, which is only applicable within a small region around the operating point. In contrast, direct methods based on energy functions extend large-signal stability analysis to a broader region around a stable equilibrium point (SEP). These methods are used to determine if a dynamic trajectory stays within the region of attraction of the SEP. Notably, the boundary of this region is defined by the stable manifolds of nearby unstable equilibrium points (UEPs) \cite{pai1981,tsolas1985,zaborszky1988}, and specific UEPs are utilized to calculate energy thresholds for stability. 
\subsubsection*{The Power Flow Equations}

An $n$-node electric power network is modeled as an undirected graph, $G = (V,E)$, where each vertex $v_k \in V$ represents a node (bus) in the power network. There is an edge, $e_{km}$ between vertices $v_k$ and $v_m$ if the corresponding nodes in the power network are connected.

Each vertex $v_k \in V$ has an associated complex power injection, $P_k + \im Q_k$, and each edge $e_{km} \in E$ has a known complex admittance, $b_{km} + \im g_{km}$, where $\im = \sqrt{-1}$.
If there is no edge between vertices $v_k$ and $v_m$ we take the admittance to be zero.

Based on Kirchhoff’s and Ohm’s Law, the power flow equations balance the supply and demand of electricity at every node in the grid. Specifically, at each node, $v_k \in V$, the relationship between the active (real) power, $P_k$, and reactive (imaginary) power, $Q_k$ is captured by the nonlinear equations
\begin{align}
    P_k &= \sum_{m=0}^{n-1} |V_k||V_m| (g_{km} \cos(\theta_k - \theta_m) + b_{km} \sin (\theta_k - \theta_m) ) \label{trigpfeqs1} \\
    Q_k &= \sum_{m=0}^{n-1} |V_k||V_m| (g_{km} \sin (\theta_k - \theta_m) - b_{km} \cos(\theta_k - \theta_m)) \label{trigpfeqs2}
\end{align}
where $V_k$ is the complex voltage magnitude and $\theta_k$ represents the complex voltage angle at node $v_k$.

In traditional power flow studies, two of the parameters $(P_k, Q_k, V_k, \theta_k)$ are assumed known while the remaining are solved for. In particular, there are three types of nodes dictating the equations considered: 
\begin{enumerate}
    \item \textbf{PV:} The active power injections $P_k$ and voltage magnitudes $V_k$ are assumed known, while the reactive power injection $Q_k$ and voltage angles $\theta_k$ are unknown. These model generator buses.
    \item \textbf{PQ:} The active and reactive power injections $P_k, Q_k$ are assumed known while the voltage magnitude $V_k$ and voltage angles $\theta_k$ are unknown. These model load buses.
    \item \textbf{Slack: } The voltage angle and magnitude is known while the active and reactive power injections are unknown. We always assume each network has a slack bus and without loss of generality, we take it to be $v_0$ and assume $\theta_0 = 0$ and $|V_0| = 1$.
\end{enumerate}

For simplicity, we will restrict our study to the case of all PV nodes and one slack bus, although the methods developed in this paper apply to PQ nodes as well. In practice $g_k \ll b_k$ so we assume $g_k = 0$. This is known as a \textit{lossless} network. We make \eqref{trigpfeqs1}-\eqref{trigpfeqs2} algebraic by considering the change of variables $x_k = |V_k| \cos (\theta_k)$ and $y_k =| V_k| \sin ( \theta_k)$. 



By rescaling the variables $x_k, y_k$ we can assume that $V_k = 1$ for all $k = 0,\ldots,n-1$. Typically we assume that the slack bus $v_0$ has $(x_0, y_0) = (1,0)$. 
In our case, we will relax this slightly and assume that at $v_0$ we have equations: 
\begin{align*}
    1 &= x_0^2 + y_0^2, \qquad 1 = x_0^2.
\end{align*}

For $k=1,\ldots, n-1$, the power flow equations for the network $G$ is the system of $2n$ equations in the $2n$ unknowns $(x_0,\ldots,x_n,y_0,\ldots,y_n):$
\begin{align}
P_k &= \sum_{m=0}^{n-1} b_{km}(x_m y_k - x_k y_m) \label{pfeq1}\\
   1 &=  x_k^2 + y_k^2
     \label{pfeq2} \\
     1 &= x_0^2. \label{pfeq4} \\
     1 &= x_0^2 + y_0^2 \label{pfeq3} 
\end{align}

For $k = 1,\ldots, n$, equations \eqref{pfeq1}-\eqref{pfeq3} are the \textit{power flow equations} and real solutions to the power flow equations are \emph{operating points} of the electric power network.  

We claim that \eqref{pfeq1}-\eqref{pfeq3} can be cast as the intersection of $2n$ ellipsoids in $\mathbb{R}^{2n}$. Consider the vector of variables $z = (x_0,\ldots,x_{n-1},y_0,\ldots,y_{n-1})$. Then \eqref{pfeq1}-\eqref{pfeq3} can be rewritten as $z^T A_k z = 1$ for all $1 \leq k \leq 2n$. A detailed reference for this construction is \cite{lesieutre2015an}, but we give a brief outline below.

Since \eqref{pfeq1}-\eqref{pfeq3} are homogeneous quadratic forms, \eqref{pfeq1} and \eqref{pfeq4} can be written as $z^T Q_k z = 1$ for some symmetric matrix $Q_k$ and \eqref{pfeq2} and \eqref{pfeq3} can be written as $z^T S_k z =1$ for $0 \leq k \leq n-1$.

Observe that in this formulation $S_k$ is a matrix of all zeros except in the $(k+1,k+1)$ and $(k+n,k+n)$ entries where the entry is one. In addition, $S_0 + \ldots + S_{n-1} = I_{2n}$. Therefore, instead of $z^T Q_k z = 1$ and $z^T S_k z = 1$, we define 
\begin{align*}
    A_k &= \alpha_{0,k} S_0 + \ldots + \alpha_{n-1,k} S_{n-1} \qquad \text{ if } \qquad  0 \leq k \leq n-1 \\
    B_k  &= \beta_{0,k} S_0 + \ldots + \beta_{n-1,k} S_{n-1} + \gamma_{0,k} Q_0 + \ldots + \gamma_{n-1,k} Q_{n-1} \qquad \text{ if }  \qquad n \leq k \leq 2n-1,
\end{align*}
where $\alpha_{ij}, \gamma_{ij} \in \mathbb{R}_{>0}$ and $\beta_{ij} \in \mathbb{R}$ are chosen such that each $A_k, B_k \succ 0$ and the linear transformation they define is full rank.

In this set up, the matrices $A_k, B_k$ inherit quite a bit of structure. Namely, each $A_k$ is a diagonal matrix with same value in entries $(i,i)$ and $(n+i-1,n+i-1)$. Similarly, each $B_k$ has the same value in entries $(i,i)$ and $(n+i-1,n+i-1)$. In addition, each $B_k$ has an entry in the $(i,j)$ and $(j,i)$ entries if and only if there is an edge $e_{i+1,j+1} \in E$.

\begin{ex}\label{ex: pf eqs}
    To illustrate, we consider an extremely simplified network: A tree with three nodes, $v_0,v_1,v_2$ where the edge $(v_0,v_1)$ has weight $b_{01}$ and the edge $(v_1,v_2)$ has weight $b_{12}$.

Here the power flow equations are
\begin{align*}
    P_1 &= b_{01}(x_0y_1 - x_1 y_0) + b_{12}(x_2 y_1 - x_1 y_2) \\
    P_2 &= b_{12}(x_1 y_2 - x_2 y_1) \\
    1&= x_1^2 + y_1^2 \\
    1 &= x_2^2 + y_2^2 \\
    1 &= x_0^2  \\
    1 &=x_0^2 + y_0^2
\end{align*}
The matrix form of these equations are included in \Cref{detailedeq}. Any linear combination of these matrices will yield a matrix with the following sparsity pattern:

\[ \begin{bmatrix}
        * & 0 & 0 & 0 & * & 0 \\
        0 & * & 0 & * & 0 & * \\
        0 & 0 & * & 0 & * & 0 \\
        0 & * & 0 & * & 0 & 0 \\
        * & 0 & * & 0 & * & 0 \\
        0 & * & 0 & 0 & 0 & *
    \end{bmatrix} \]
\end{ex}

\subsection{Problem Statement and Related Challenges}
We can now concisely state our target problem: For a given network $G=(V,E)$ find the tuple of $n$-matrices $(A_1,A_2,\cdots,A_n)$ that maximizes the number of real solutions to the system of equations in  (\ref{pwfloweq}). 
\begin{equation} \label{pwfloweq} 
\norm{A_1}_2^2 = \norm{A_2}_2^2 = \cdots = \norm{A_n}_2^2 =1  
\end{equation}
There are two main challenges for solving this problem: 
\begin{enumerate}
    \item State-of-the-art computational algebra software scales very poorly in terms of $n$, and this restricts the experimental capabilities to only small instances. That is, given an instance of equations we are only able to solve it if $n$ is small.
    \item Even for small $n$, the state-of-the-art techniques in computational algebra perform poorly for navigating the power flow equation landscape. Consequently, it seems not possible to use these techniques to solve the problem stated above.
\end{enumerate}
In this paper, we address the first technical obstacle by designing a probabilistic reward function that approximates the number of real solutions in a mathematically rigorous way, and scales better than computational algebra algorithms. And the central claim of this paper is that reinforcement learning offers a powerful framework to address the second challenge.
\subsection{Our Contributions}
\begin{itemize}
\item We develop the first machine learning based approach to modeling power-flow equations. Specifically, we formulate the search for power-flow equations with many solutions as a RL problem.
\item We analyze the average-case behavior of these equations as a baseline. This average was unknown prior to this work.
\item We design a novel probabilistic reward function that rigorously  approximates the number of roots of the power-flow equations. Current state-of-the-art in computational algebra can only handle very small instances, whereas our approach is parallelizable and scalable.
\item Our experimental results reveal that RL agents can discover systems of equations yielding substantially more solutions than typically expected. Beyond its direct applications in power-flow network analysis, this success demonstrates reinforcement learning's potential to tackle complex problems in non-linear algebra and geometry. Specifically, the field of real algebraic geometry offers many open conjectures that are now ripe for testing via RL
\end{itemize}
\subsection{Related Work}
To the best of our knowledge, there is no prior machine learning based approach in literature that addresses our target problem. So, we review the application of ML to broader tasks within power-flow system design and modeling. Subsequently, we review  approaches developed by the computational mathematics community.
\subsubsection{Machine Learning in Power Systems}
ML methods, including supervised learning, unsupervised learning, and deep learning, have been used for load and renewable generation forecasting \cite{jain2023loadforecasting,saxena2024intelligent}, state estimation \cite{jongh2022physics}, fault detection \cite{zaborszky1988}, stability assessment \cite{MERIDJI2023108981}, and optimal power flow approximation \cite{van2021machine}. For a more comprehensive review, see \cite{KHALOIE2025125637}.

Reinforcement learning (RL), in particular, has emerged as a promising framework for sequential decision-making problems in power systems \cite{GLAVIC20176918}. By modeling grid operation as a Markov decision process, RL enables data-driven control policies for tasks such as voltage regulation \cite{zhou2021voltage}, frequency control \cite{ADEMOYE201234}, demand response \cite{VAZQUEZCANTELI20191072} and energy storage management \cite{CARDOMIOTA2025125561}. RL is especially attractive in settings with increasing uncertainty due to renewable integration and distributed energy resources, where explicit modeling of all system dynamics may be intractable. However, practical deployment remains challenging due to safety constraints, sample inefficiency, and the need for robust performance under rare but critical contingencies. Recent research therefore focuses on safe RL, constraint-aware formulations, and the integration of RL with optimization-based controllers and domain knowledge to ensure stability, reliability, and interpretability in real-world grid operations. See \cite{su2025saferl} for a review on safe RL in power systems.

\subsubsection{Counting Real Roots for Average-Case}
The study of the expected number of real roots of polynomial systems took off in the $1990$s. While classical results like Bezout’s Theorem tell us that a system of $n$ polynomial equations with degrees $d_1,\ldots, d_n$ can have up to $d_1 \cdots d_n$ isolated complex solutions, the number of real solutions is often much lower than this bound. 
In \cite{shub1993complexity} the authors show that when the coefficients of the polynomial equations are independent, zero mean, Gaussian random variables, then the expected number of real solutions is $\sqrt{d_1 \cdots d_n}$. 
This analysis was later extended to the case of sparse and multihomogeneous polynomial systems \cite{Rojas96onthe, mclennan2002the}. 

Since the power flow equations possess additional structure not present in general polynomial systems (e.g., shared coefficients), the aforementioned results do not apply to our problem. There has been some empirical work on the expected number of real solutions to the power flow equations \cite{zachariah2018distributions, lindberg2023the} as well as the space of network parameters that admit a fixed number of real solutions \cite{lindberg2018the, lesieutre2019on}. 

\subsubsection{Stability Analysis via Root Finding}
Identifying critical unstable equilibrium points (UEPs) on the boundary of the region of attraction has been a persistent challenge. To tackle this issue, several techniques have been proposed for obtaining multiple solutions to the power flow equations. One widely used method is the Newton-Raphson algorithm, which is run with different starting points to find various equilibrium states. However, the convergence of these iterative methods is not always reliable, especially given the potentially fractal nature of the convergence regions \cite{thorp1989}. While there have been improvements in approaches to find certain UEPs \cite{treinen1996,chiang1994,chiang2011,Liu2005,Lee2009,Klump2000}, these methods are not comprehensive and fail to \textit{provably} find all UEPs.

The only provable way to determine all the required UEPs is to calculate all equilibrium i.e. to find all real solutions to the power flow equations. Homotopy and tracing algorithms have been applied \cite{molzahn2013counterexample,tavora1972stability,ma1993an,mehta2016numerical,lesieutre2015an,salam1989parallel,wu_2017, mehta2016recent, lindberg2023the}, but in general, these methods scale poorly\footnote{  The authors in \cite{ma1993an} give an algorithm that scales linearly in the number of real solutions. However, it was later shown that this algorithm may fail to find all real solutions \cite{molzahn2013counterexample}.}.  Recently, monodromy algorithms have been tailored to the power flow equations \cite{lindberg2021exploiting} where all solutions for a network with $20$ nodes were computed. This is the largest realistic situation where all solutions have been provably found. For a general network, finding a stopping criterion for monodromy methods relies on computing a mixed volume \cite{ChenKorchevskaiaLindberg2022Typical}, which is a $\#$ P hard problem \cite{dyer1998on}.

 A fundamental reason for the inefficiency of homotopy methods is that these algorithms have to compute all complex roots before filtering the real ones among them. In addition to the expected number of real roots being much smaller than the number of complex roots, it is also often observed that parameter instances possessing many more real roots than expected are very small in the overall parameter space. Therefore, finding parameter instances with many real roots is of general interest. One such approach to this problem is hill climbing. Hill climbing is a local search technique that iteratively moves in parameter space to parameter values with more real solutions. This greedy approach was developed for a problem in kinematics \cite{dietmaier1998} but has seen success in problems in enumerative algebraic geometry \cite{brysiewicz2021nodes,Brysiewicz2021tangent}. 
 Hill climbing algorithms are local search heuristics, and therefore are highly sensitive to initialization, and they can easily become trapped in suboptimal local maxima corresponding to configurations with fewer real solutions. 

\section{Average Case Analysis} \label{sec:average-case-analysis}
This section will present our mathematical derivation for average case behavior and will prepare the reader for probabilistic reward function design in the next section.  Our randomness model for this section is the following: $A_1, A_2, \cdots, A_n$ are random Gaussian matrices with independent entries centered at zero and variance $\sigma^2$. What is the expected number of solutions to the following system of equations?
\[ \norm{A_1 x}^2 = \norm{A_2 x}_2^2 = \cdots = \norm{A_n x}_2^2 = 1 \]
This estimate will serve as a baseline for our work on finding configurations with many real roots. Surprisingly, this result was not known despite the large body of work on power flow equations.  \cite[Theorem 6.7]{azais2009level}. 
\begin{enumerate}
\item We have random field $Z(t) \in \mathbb{R}^d$ that is created as  $Z(t) = H [Y(t)]$ where  
\[  \{ Y(t) : t \in  W \} \]
 is a Gaussian field indexed by a set $W \in \mathbb{R}^d$ and having values $Y(t) \in \mathbb{R}^n$ for some $n \geq d$, and $H : \mathbb{R}^n \rightarrow \mathbb{R}^d$ is a continuously differentiable function.
\item $Z(t)$ has a density $p_{Z(t)}(x)$ that is continuous function of the pair $(t,x) \in W \times \mathbb{R}^d$
\item $\mathbb{P}\{  \exists t \in W : Z(t) = u \; , \; \det(Z'(t)) = 0  \} = 0 $
\end{enumerate}
Our random field $Z$ is defined as follows: We first define $Y(x)$ as follows: 
\[ Y: \mathbb{R}^n \rightarrow \mathbb{R}^{n^2} \; , \;  Y(x) =  (A_1 x , A_2 x , \ldots, A_n x) \]
Then define a map $H$ as follows:
\[ H : \mathbb{R}^{n^2} \rightarrow \mathbb{R}^n \; , \;  H(y_1, y_2, \ldots, y_n) = (\norm{y_1}^2 , \ldots, \norm{y_n}^2) \]
where $\norm{.}$ denotes the standard Euclidean norm. The random field is then defined as  $Z(x) := H \circ Y(x)$. The expected number of solutions is then given by the following formula:
\begin{equation} \label{formula-prelim}
\mathbb{E} \left (  N(Z, \mathbb{R}^n)  \right)  = \int_{\mathbb{R}^n}  \mathbb{E} \left(  \abs{det(Z'(x))} \; |  \; Z(x) = (1,1,\cdots,1)   \right) \; p_{Z(x)}(1,1,\cdots,1) \; dx   
\end{equation}
Now we need to unravel the density $p_{Z(x)}(u)$:
\[ p_{Z(x)}(1,1,\ldots,1) = p_{H(Y(x))}(1,1,\ldots,1) = P_{H(Y( \norm{x} e_1))}(1,1,\ldots,1)  \]
Where the second equality follows from rotational invariance of the Gaussian field $Y$. Now note that $A_1, A_2, \ldots, A_n$ are independent, so we have
\[   P_{H(Y(\norm{x}e_1))}(1,1,\ldots,1) = P_{H(Y \norm{x} e_1)}(1, \ldots,1) = \left( P_{ \norm{x}^2 \norm{A_1 e_1}^2}(1) \right)^n \]

The random variable $\norm{x}^2 \norm{A_1 e_1}^2 = \norm{x}^2 \left( \psi_1^2 + \ldots + \psi_n^2 \right)$ where $\psi_i \sim \mathcal{N}(0,\sigma^2)$, and $\frac{\norm{x}^2\norm{A_1e_1}^2 }{\norm{x}^2 \sigma^2}$  follows a $\chi_n^2$ distribution.  So we have
\[ P_{\norm{x}^2 \norm{A_1 e_1}^2}(1) = P_{\chi_n^2} (\frac{1}{\sigma^2 \norm{x}^2})  \frac{1}{ \norm{x}^2\sigma^2} \]
and
\begin{equation} \label{density}
p_{Z(x)}(1,1,\ldots,1)   = \left(   \frac{1}{2^{\frac{n}{2}}  \Gamma(\frac{n}{2}) } (   \frac{1}{\sigma^2 \norm{x}^2} )^{\frac{n}{2} -1} e^{-\frac{1}{2 \sigma^2 \norm{x}^2}}     \frac{1}{ \norm{x}^2 \sigma^2}  \right)^n.
\end{equation}
To estimate the formula we need to compute $Z'(x)$. For convenience let us define the following map that sends $x \in \mathbb{R}^n$ to an $n \times n$ matrix $A(x)$ as follows:
\[ A: \mathbb{R}^n \rightarrow \mathbb{R}^{n \times n}  \;  , \;   A(x)^T =  [  A_1^T A_1 x, A_2^T A_2 x, \ldots, A_n^T A_n x   ] \]
Then, we have
\[ Z'(x)= 2 A(x) \; , \;  Z(x) =  A(x) x   \]
Now let $y = Q x $ for an orthogonal matrix $Q$, define $B_i =  A_i Q $, and also extend this definition to the definition of the map $B(x)$ as 
\[ B(x)^T  = [ B_1^T B_1 x, \ldots, B_n^T B_n x ] . \] 
Then, we have $ A(y) y =  B(x) x$, but $A(y) \neq B(x)$. However, we note that every row of $B(x)$ is obtained acting on rows of $A(y)$ with the orthogonal matrix $Q$. So, we naturally have $\det(B(x)) = \det(A(y))$.
Now note that rows and columns of $A_i$ are centered Gaussian random vectors, therefore due rotational invariance of Gaussians the distribution of $A_i$ does not change under the action of $Q$.  This means the following:
\[    \mathbb{E} \left(  \abs{det(Z'(x))} \; |  \; Z(x) = (1,1,\ldots,1 ) \right)   =  \mathbb{E} \left(  \abs{det(Z'(y))} \; |  \; Z(y) = (1,1,\ldots,1)   \right)   \]
Also note that the density formula is invariant under the action of $Q$ as well. This means using an arbitrary choice $x = e_0$ we can simplify as follows:
\begin{equation} \label{formula-pre}
\abs{S^{n-1}} \int_{0}^{\infty}  \mathbb{E} \left(  \abs{det(Z'(t e_1))} \; |  \; Z(t e_1) = (1, \ldots,1 ) \right) \; p_{Z(t e_1)} (1,\ldots,1) t^{n-1} \; dt 
\end{equation}
where $\abs{S^{n-1}}$ denotes the volume of the sphere $S^{n-1}$. Note that  
\[ Z(t e_1)= (1,\ldots,1)  \Leftrightarrow Z(e_1) =  (\frac{1}{t^2}, \ldots, \frac{1}{t^2}) = \left( (A_i A_i^{T})_{11} \right)_{i}, \;
\text{and} 
\;  Z'(te_1) =  t Z'(e_1). \]
Therefore,
\[   \mathbb{E} \left(  \abs{det(Z'(t e_1))} \; |  \; Z(t e_1) = (1,\ldots,1 ) \right) =   t^n \; \mathbb{E} \left(  \abs{det(Z'(e_1))} \; |  \; Z(e_1) = (\frac{1}{t^2}, \ldots, \frac{1}{t^2}) \right)\]

Let the columns of the matrices $A_i$ be $a_{ij}$ for $j=1,2,\ldots,n$. The assumption $Z(e_1) = (\frac{1}{t}, \ldots, \frac{1}{t^2})$ is equivalent to $\norm{a_{i1}}=\frac{1}{t}$ for all $i$. Moreover, we have
\[  Z'(e_1) = 2 
\begin{bmatrix*}
\langle a_{11} , a_{11} \rangle & \langle a_{11} , a_{12} \rangle & \ldots & \langle a_{11} , a_{1n} \rangle \\
\vdots                          &                                  &       &   \vdots                         \\
\langle a_{n1} , a_{n1} \rangle & \langle a_{n1} , a_{n2} \rangle & \ldots & \langle a_{n1} , a_{nn} \rangle \\
\end{bmatrix*}
 \]
That is, the assumption  $Z(e_1) = (\frac{1}{t}, \ldots, \frac{1}{t^2})$ vs  $Z(e_1) = (1, \ldots, 1)$ scales the determinant with $\frac{1}{t^2}. \frac{1}{t^{n-1}}=\frac{1}{t^{n+1}}$. This yields,
\[   \mathbb{E} \left(  \abs{det(Z'(t e_1))} \; |  \; Z(t e_1) = (1,\ldots,1 ) \right) =  t^{-1} \mathbb{E} \left(  \abs{det(Z'(e_1))} \; |  \; Z(e_1) = (1, \ldots, 1) \right) . \]
So, we can further simplify as
\begin{equation}
\abs{S^{n-1}} \; \mathbb{E} \left(  \abs{det(Z'(e_1))} \; |  \; Z(e_1) = (1,\ldots,1 ) \right)   \int_{0}^{\infty} t^{n-2} p_{Z(t e_1)} (1,\ldots,1) \; dt 
\end{equation}

We note that scaling all equations with the same constant does not change the root count, so after this point w.l.o.g we'll assume $\sigma =1$.   
\begin{prop} \label{randomdet}
    \[ \abs{S^{n-1}} \; \mathbb{E} \left(  \abs{det(Z'(e_1))} \; |  \; Z(e_1) = (1,\ldots,1 ) \right)  
  \sim  \pi^{\frac{n-2}{2}}  2^{\frac{3n+1}{2}}  \] 
\end{prop}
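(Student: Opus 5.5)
We first write out the conditional determinant explicitly. Conditioned on $Z(e_1)=(1,\ldots,1)$, each first column satisfies $\norm{a_{i1}}=1$. Since the $A_i$ are independent with i.i.d.\ standard Gaussian entries, conditioning on the norm of $a_{i1}$ leaves $a_{i1}$ uniform on $S^{n-1}$. The remaining columns $a_{i2},\ldots,a_{in}$ are standard Gaussian vectors independent of $a_{i1}$. By rotational invariance, $\langle a_{i1},a_{ij}\rangle$ for $j\ge 2$ is therefore an independent $\mathcal N(0,1)$ variable, independent across $i$ and $j$. Hence, conditionally,
\[ Z'(e_1) = 2\begin{bmatrix} 1 & g_{12} & \cdots & g_{1n}\\ \vdots & & & \vdots \\ 1 & g_{n2} & \cdots & g_{nn}\end{bmatrix}, \]
with all $g_{ij}$ i.i.d.\ standard normal. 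The whole problem thus reduces to computing $2^n\,\mathbb E\abs{\det[\mathbf 1 \mid G]}$, where $G$ is an $n\times(n-1)$ Gaussian matrix.

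Next we evaluate this expectation by a base-times-height argument. The absolute determinant equals $\mathrm{vol}_{n-1}(G)$ times the distance from $\mathbf 1$ to the column span of $G$. That distance is $\sqrt n\,\abs{\langle u,\mathbf 1/\sqrt n\rangle}$, where $u$ is the unit normal to $\mathrm{span}(G)$. By rotational invariance of $G$, $u$ is uniform on $S^{n-1}$ and independent of $\mathrm{vol}_{n-1}(G)$. For the height we use $\mathbb E\abs{u_1}=\Gamma(n/2)/(\sqrt\pi\,\Gamma((n+1)/2))$. For the base we use the classical chi-product formula
\[ \mathbb E\,\mathrm{vol}_{n-1}(G)=\prod_{k=2}^{n}\mathbb E\chi_k=\prod_{k=2}^n \sqrt2\,\frac{\Gamma((k+1)/2)}{\Gamma(k/2)} = 2^{(n-1)/2}\frac{\Gamma((n+1)/2)}{\Gamma(1)}. \]
Multiplying, the Gamma factors telescope, and we get the exact value
\[ \mathbb E\abs{\det[\mathbf 1\mid G]} = \sqrt{n}\,2^{(n-1)/2}\,\Gamma(n/2)/\sqrt\pi. \]
Equivalently, this equals $\sqrt n$ times a Gaussian-determinant expectation, and in that form it can be checked against $\mathbb E\abs{\det G_n}$ with $\mathbf 1$ replaced by $\sqrt n$ times a Gaussian column.

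Finally we multiply by $\abs{S^{n-1}}=2\pi^{n/2}/\Gamma(n/2)$. The $\Gamma(n/2)$ factors cancel exactly, leaving
\[ 2\pi^{n/2}\cdot 2^n\cdot \sqrt n\, 2^{(n-1)/2}\pi^{-1/2} = \sqrt n\,\pi^{(n-1)/2}\,2^{(3n+1)/2}. \]
This matches the claimed $\pi^{(n-2)/2}2^{(3n+1)/2}$ in its power of $2$, but the polynomial prefactor $\sqrt{n\pi}$ differs. So the form of the stated asymptotic will have to be reconciled with whatever convention the authors use for $\sim$ (up to factors polynomial in $n$, or up to constants), or a slip tracked down.

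\textbf{Main obstacle.} I expect the hardest step to be the first one: rigorously justifying the conditional law. This requires checking that conditioning on the measure-zero event $\norm{a_{i1}}=1$, in the sense of the Kac--Rice regular conditional expectation, really yields i.i.d.\ Gaussian entries $g_{ij}$ in the remaining columns. It also requires showing that the factor $2$ and the fixed first column enter exactly as above. Everything after that is the standard Gaussian volume computation.
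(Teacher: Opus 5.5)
Your computation is correct, and your reduction is the paper's. Conditioning makes each $a_{i1}$ uniform on $S^{n-1}$, so $Z'(e_1)=2[\mathbf{1}\mid G]$ with $G$ an $n\times(n-1)$ standard Gaussian matrix.

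For the determinant, the paper uses the dual base--height split, $\abs{\det[\mathbf{1}\mid X]}=\sqrt{n}\,\mathrm{vol}_{n-1}(PX)$, with $P$ the projection onto $\mathbf{1}^\perp$. Written in an orthonormal basis of $\mathbf{1}^\perp$, $PX$ is an $(n-1)\times(n-1)$ standard Gaussian matrix. So this gives $\sqrt{n}\,\mathbb{E}\abs{\det G_{n-1}}$ at once, and avoids both your independence argument for the normal $u$ and the computation of $\mathbb{E}\abs{u_1}$. Both routes give your exact value $\sqrt{n}\,2^{(n-1)/2}\Gamma(\frac{n}{2})/\sqrt{\pi}$.

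In your side remark, $\mathbf{1}$ should be replaced by $\sqrt{n}$ times a \emph{uniform unit} column, not a Gaussian one; a Gaussian column would bring in an extra factor $\mathbb{E}\chi_n$.

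The step you call the main obstacle is routine. $Z(e_1)$ depends only on the first columns, which are independent of the rest. Given $\norm{a_{i1}}=r_i$, the $i$-th row of $Z'(e_1)$ is $2(r_i^2, r_i g_{i2},\ldots,r_i g_{in})$ with i.i.d.\ standard normal $g_{ij}$. This law is continuous in $r$, and evaluating it at $r_i=1$ is exactly the version of the conditional expectation the Kac--Rice formula requires.

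The discrepancy you found is real, and it is not yours to reconcile. Your value $\sqrt{n}\,\pi^{(n-1)/2}2^{(3n+1)/2}$ is exactly $\sqrt{n\pi}$ times the stated right-hand side. So the proposition is false for $\sim$ in the sense of ratio tending to $1$. That is how the paper uses $\sim$ elsewhere (Stirling, and $\Gamma(\frac{n-1}{2})/\Gamma(\frac{n}{2})\sim\sqrt{2/(n-1)}$). The statement holds only at the exponential scale.

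The paper's proof reaches the stated form through two slips. Correcting both in its own chain reproduces your value exactly.
\begin{itemize}
\item It uses $\abs{S^{n-1}}=2\pi^{(n-1)/2}/\Gamma(\frac{n}{2})$ instead of $2\pi^{n/2}/\Gamma(\frac{n}{2})$, losing $\sqrt{\pi}$.
\item With $\mathbb{E}\abs{\det G_m}=2^{m/2}\Gamma(\frac{m+1}{2})/\sqrt{\pi}$, its value $2^{(n-2)/2}\Gamma(\frac{n-1}{2})/\sqrt{\pi}$ for $\mathbb{E}\abs{\det(PX)}$ is the case $m=n-2$. But $PX$ is effectively $(n-1)\times(n-1)$, so this loses $\sqrt{2}\,\Gamma(\frac{n}{2})/\Gamma(\frac{n-1}{2})\sim\sqrt{n}$.
\end{itemize}

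You should therefore state the exact identity rather than hedge. Note also that, carried through the rest of the paper's computation, the missing $\sqrt{n\pi}$ removes the $n^{-1/2}$ in Theorem~\ref{average-count}.
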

\begin{proof}
Conditioning on $Z(e_1)=(1,1,\ldots,1)$ is equivalent to assuming
\[ \norm{A_1 e_1} = \norm{A_2 e_1} = \cdots = \norm{A_n e_1} =1  \]
Notice that $\norm{A_i e_1}^2 = e_1^T A_i^T A_i e_1$ which is simply the norm of first column of $A_i$.
So, we are conditioning on the event that first column of all matrices have norm $1$, which is equivalent to assuming
the first columns being independent uniformly random vector on the unit sphere.
Now note that 
\[ Z'(e_1) = 2 [ A_1^TA_1 e_1  A_2^TA_2 e_1  \cdots A_n^T A_n e_1 ]\]
Observe that the vector $A_i^T A_i e_1$ is conditioned on having first coordinate $1$, and the others being an independent Gaussian vector multiplied with a uniform vector on the sphere.
That is, the vector has first coordinate $1$ and the rest of the coordinates are independent Gaussians with mean zero and variance one. In return, the matrix 
\[ [ A_1^TA_1 e_1  \ A_2^TA_2 e_1  \ \cdots \  A_n^T A_n e_1 ] \]
has first column fixed to be $(1,1,\cdots,1)$ and the rest of the columns are independent Gaussian.  Notice that due to Hadamard determinant formula
\[  \abs{\det \left( [ A_1^TA_1 e_1  A_2^TA_2 e_1  \cdots A_n^T A_n e_1 ] \right)} =  \sqrt{n} \abs{\det( PX)} \]
where $P$ is the projection on the orthogonal complement of $(1,1,\cdots,1)$ and $X$ is $n \times (n-1)$ Gaussian matrix with independent entries and variance $1$.  
The expectation of $\mathbb{E} \abs{\det (PX)}$ is known (cite), we have 
\[ \mathbb{E} \abs{\det (PX)} \sim  \frac{2^{\frac{n-2}{2}} \Gamma(\frac{n-1}{2})}{\sqrt{\pi}} \]
Note that 
 \[ \abs{S^{n-1}}  = \frac{2 \pi^{\frac{n-1}{2}}}{\Gamma(\frac{n}{2})}  \]
 So, we have
 \[ |S^{n-1}| \mathbb{E} \left(  \abs{det(Z'(e_1))} \; |  \; Z(e_1) = (1,\ldots,1 ) \right)   =  2^n \sqrt{n} \pi^{\frac{n-2}{2}} 2^{\frac{n}{2}} \frac{\Gamma(\frac{n-1}{2})}{\Gamma(\frac{n}{2})} \]
\[  \frac{\Gamma(\frac{n-1}{2})}{\Gamma(\frac{n}{2})} \sim \frac{1}{\sqrt{\frac{n-1}{2}}}\]
 
\end{proof} 

Using this claim we can prove the following theorem.
\begin{thm} \label{average-count}
Let $A_1,A_2,\ldots,A_n$ be random Gaussian matrices with independent entries of standard deviation $\sigma > 0$. Then, the expected number of solutions to the following system of equations 
\[ \norm{A_1x}^2 = \norm{A_2 x}^2 = \ldots = \norm{A_n x}^2 =1 \]
is  
 \[ c_1  n^{-\frac{1}{2}} 2^{\frac{n}{2}} . \]
\end{thm}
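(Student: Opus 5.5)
The plan is to combine the reduction already carried out before Proposition~\ref{randomdet} with that proposition and an explicit one-dimensional integral. By the Kac--Rice formula (\ref{formula-prelim}) and the rotational invariance argument, the expected count equals
\[
\abs{S^{n-1}} \; \mathbb{E} \left(  \abs{\det(Z'(e_1))} \; |  \; Z(e_1) = (1,\ldots,1 ) \right) \int_{0}^{\infty} t^{n-2}\, p_{Z(t e_1)} (1,\ldots,1) \; dt .
\]
Scaling all matrices by a common constant does not change the solution set's cardinality: it only rescales $x$. So I set $\sigma=1$. Proposition~\ref{randomdet} then supplies the first factor, asymptotically $\pi^{\frac{n-2}{2}} 2^{\frac{3n+1}{2}}$. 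The remaining work is to evaluate the radial integral exactly and multiply.

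\textbf{Radial integral.} For the radial integral I substitute the density (\ref{density}) with $\norm{x}=t$. Writing $s = \frac{1}{t^2}$, the integrand becomes
\[
t^{n-2}\left(\frac{1}{2^{n/2}\Gamma(\frac n2)}\right)^{n} s^{\frac{n^2}{2}} e^{-\frac{ns}{2}} .
\]
Next I change variables via $dt = -\tfrac12 s^{-3/2} ds$ and $t^{n-2}=s^{-\frac{n-2}{2}}$. This turns the integral into a Gamma integral,
\[
\frac12\,(2^{n/2}\Gamma(\tfrac n2))^{-n}\int_0^\infty s^{\frac{n^2-n-1}{2}} e^{-\frac{ns}{2}}\,ds
= \frac12\,(2^{n/2}\Gamma(\tfrac n2))^{-n}\,\Gamma\!\left(\tfrac{n^2-n+1}{2}\right)\left(\tfrac{2}{n}\right)^{\frac{n^2-n+1}{2}} .
\]
This piece is routine; I would just double check the exponent bookkeeping, since the whole answer hinges on it.

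\textbf{Asymptotics.} The main step is the Stirling estimate of this closed form. I apply Stirling to $\Gamma(\frac{n^2-n+1}{2})$ and to $\Gamma(\frac n2)^n$. The super-exponential terms $n^{n^2/2}$ and the $e^{-n^2/2}$ factors should cancel between numerator and denominator. What survives is an exponential in $n$ times a power of $n$.

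I would then combine this with the $\pi^{\frac{n-2}{2}}2^{\frac{3n+1}{2}}$ from Proposition~\ref{randomdet} and collect the powers of $2$, $\pi$ and $n$. The goal is to show that the $\pi$ powers cancel against the $(2\pi)$ factors coming from Stirling applied to $\Gamma(\frac n2)^n$, leaving $c_1 n^{-1/2}2^{n/2}$. The $n^{-1/2}$ arises from the $\Gamma(\frac{n-1}{2})/\Gamma(\frac n2)$ ratio in the proposition together with the leftover $\sqrt{\cdot}$ prefactors from Stirling.

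\textbf{Expected obstacle.} The main risk is that the exponential parts do not cancel cleanly. Stirling applied to the $n$-th power $\Gamma(\frac n2)^n$ produces correction factors such as $e^{n\cdot O(1/n)}$ and $(\pi n)^{n/2}$. These must be matched carefully against $\Gamma(\frac{n^2-n+1}{2})(2/n)^{(n^2-n+1)/2}$, so I would track second-order terms like $(1-\frac1n)^{n^2/2}\approx e^{-n/2}$. If residual factors of the form $e^{cn}$ or $\pi^{cn}$ survive, they will be absorbed into the base of the exponential, and the constant $c_1$ will be fixed only at the end.
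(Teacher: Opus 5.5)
\textbf{Verdict.} Your plan is the paper's own argument: Kac--Rice reduction, Proposition~\ref{randomdet} for the angular factor, the radial integral as a Gamma integral, then Stirling. Your closed form for the radial integral is correct and agrees with the paper's.

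\textbf{The Stirling step.} The step you leave open actually cancels exactly. With Proposition~\ref{randomdet} as stated, the logarithm of the product is $\frac{n+1}{2}\ln 2-\frac12\ln n-\frac12\ln\pi+\frac1{12}+o(1)$, so no stray $e^{cn}$ or $\pi^{cn}$ survives. Your fallback of absorbing such factors into the base would not have been admissible anyway, because the statement fixes the base at $\sqrt2$. Also, the $n^{-1/2}$ does not come from $\Gamma(\frac{n-1}{2})/\Gamma(\frac n2)$. Inside the proposition that ratio absorbs the $\sqrt n$, since $\sqrt n\,\Gamma(\frac{n-1}{2})/\Gamma(\frac n2)\to\sqrt2$. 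The $n^{-1/2}$ comes entirely from the radial factor.

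\textbf{The genuine gap: the imported proposition is wrong.} Proposition~\ref{randomdet} is miscomputed in two places.
\begin{itemize}
\item The sphere volume is $\abs{S^{n-1}}=2\pi^{n/2}/\Gamma(\frac n2)$, not $2\pi^{\frac{n-1}{2}}/\Gamma(\frac n2)$.
\item $PX$ is in effect an $(n-1)\times(n-1)$ standard Gaussian matrix, so $\mathbb{E}\abs{\det PX}=2^{\frac{n-1}{2}}\Gamma(\frac n2)/\sqrt\pi$. The value used in the proof is the $(n-2)\times(n-2)$ one. Already at $n=2$, with $X=(g_1,g_2)^T$, you have $\abs{\det[\mathbf 1\mid X]}=\abs{g_2-g_1}$, whose mean is $2/\sqrt\pi$. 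The proposition's formula gives $\sqrt2$.
\end{itemize}

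\textbf{Consequence.} With both corrections the $\Gamma(\frac n2)$'s cancel, and the angular factor is exactly $\sqrt n\,\pi^{\frac{n-1}{2}}2^{\frac{3n+1}{2}}$. This is larger than the proposition's value by $\sim\sqrt{\pi n}$. The same Stirling computation then gives $\mathbb{E}N\sim\sqrt2\,e^{1/12}\,2^{n/2}$, with no $n^{-1/2}$.

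\textbf{Checks.} The corrected exact formula behaves correctly at small $n$.
\begin{itemize}
\item At $n=1$ it gives $2$, which is right since $a^2x^2=1$ always has two roots.
\item At $n=2$ it gives $\pi$. This matches the direct count: two centred conics meet in $4$ or $0$ points, and $4\,\mathbb{P}(A_1^TA_1-A_2^TA_2\text{ indefinite})=\pi$. The proposition's exact expression gives $\pi/\sqrt2$ there.
\item At $n=10$ it gives about $49.6$, consistent with the random-sampling average $49.36$ reported in the experiments.
\end{itemize}

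So carrying out this method faithfully cannot produce $c_1n^{-1/2}2^{n/2}$; what it proves is growth of order $2^{n/2}$. To get a valid argument you must recompute the conditional determinant expectation yourself rather than cite Proposition~\ref{randomdet}.
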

The  estimate in Thm \ref{average-count} will serve as the baseline for our agents exploration.
Using Proposition \ref{randomdet}, the proof of Theorem \ref{average-count}  then is completed as follows:

\[  p_{Z(t e_0)} (1,\ldots,1)  =  \left( 2^{\frac{n}{2}} \Gamma(\frac{n}{2}) \right)^{-n}    t^{-n^2} e^{-\frac{n}{2 t^2}}   \] 
 
\[  \int_{0}^{\infty} t^{n-2} p_{Z(t e_0)} (1,\ldots,1) \; dt  =  \left( 2^{\frac{n}{2}} \Gamma(\frac{n}{2}) \right)^{-n}    \int_{0}^{\infty}  t^{-2}  t^{-n^2+n} e^{-\frac{n}{2t^2}} \; dt \]

For $u=\frac{1}{t}, \; du = -\frac{1}{t^{2}} \; dt$ we have
 \[  \int_{0}^{\infty}  t^{-2}   t^{-n^2+n} e^{-\frac{n}{2t^2}} \; dt =  \int_{0}^{\infty} u^{n^2-n} e^{-\frac{nu^2}{2}}  \; du  \]

\[ \mathbb{E} \left (  N_u(Z, \mathbb{R}^n)  \right)   \sim  \pi^{\frac{n-2}{2}}  2^{\frac{3n+1}{2}}    \left( 2^{\frac{n}{2}} \Gamma(\frac{n}{2}) \right)^{-n}  \int_{0}^{\infty} u^{n^2-n} e^{-\frac{nu^2}{2}}  \; du  \]

Now notice that the integral expression on the right-hand-side is almost equivalent to $n^2-n$'th moment of $z \sim \mathcal{N}(0,\frac{1}{n})$. Using known formulas for Gaussian moments, this gives
\[ \int_{0}^{\infty} u^{n^2-n} e^{-\frac{nu^2}{2}}  \; du = 2^{\frac{n^2-n-1}{2}} n^{-\frac{n^2-n+1}{2}} \Gamma(\frac{n^2-n+1}{2})  \]
So, we have
\[ \mathbb{E} \left (  N_u(Z, \mathbb{R}^n)  \right)   \sim  \pi^{\frac{n-2}{2}}  2^{n}    \Gamma(\frac{n}{2})^{-n}  n^{-\frac{n^2-n+1}{2}} \Gamma(\frac{n^2-n+1}{2})  \]

Recall that (Stirling): 
   \[  \ln \left( \Gamma(z) \right) \sim (z-\frac{1}{2}) \ln(z) -z + \frac{1}{2} \ln(2 \pi) + \frac{1}{12z} \]
We use this to estimate 
\[   n \ln(2) + \frac{n-2}{2} \ln(\pi) -n \ln (\Gamma(\frac{n}{2})) + \ln (\Gamma(\frac{n^2-n+1}{2})) - \frac{n^2-n+1}{2} \ln(n)     \]
And we get
\[ \frac{n+1}{2} \ln(2) - \frac{1}{2} \ln(n) - \frac{1}{2} \ln (\pi) + \frac{1}{12} \]
This implies,
\begin{equation}
    \mathbb{E} \left (  N_u(Z, \mathbb{R}^n)  \right)   \sim  n^{-\frac{1}{2}} 2^{\frac{n+1}{2}} \frac{e^{\frac{1}{2}}}{\sqrt{\pi}}
\end{equation}

\section{Reward Function Design} \label{rewarddesign}
Counting real roots of systems of quadratic equations for large number of variables is beyond reach out state-of-the-art computational algebra algorithms; this is the true reward for the agents. Using the tools from previous section, we can design a rigorous approximate reward as follows: Suppose the system of equations we are trying to count roots for is $(A_1,A_2,\cdots,A_n)$, we define a collection of random matrices as 
$\tilde{A_i} = A_i + \delta X_i$ where $0 < \delta <1$ is an interpolation parameter, and $X_i$ are random Gaussian matrices with mean zero variance one entries. If we can estimate the expected number of solutions to 
\[ \norm{\tilde{A_1} x}_2^2 = \norm{\tilde{A_2} x}_2^2 = \cdots = \norm{\tilde{A_n}x }_2^2 =1  \]
this can serve as a principled proxy. This is what we manage, but it takes several mathematical insights and practical tricks to do so. This section will be more mathematical, and  \Cref{rewardimp} will be more algorithmic. We start by stating a tool from \cite[Thm 4.5.9]{barvinok2016combinatorics}: 
\begin{lem} \label{scaling}
Let  $M_1,M_2,\ldots, M_n$ be positive definite matrices, then there exists an invertible matrix $Z$ and positive scalars $\alpha_1,\alpha_2,\ldots,\alpha_n$ such that $T_i = \alpha_i Z^T M_i Z$ satisfies
\[ \mathrm{Trace}(T_i) = 1 \; \text{for each} \; i=1,2,\ldots,n, \; \text{and} \; T_1 + T_2 +\ldots + T_n = \mathbb{I}   \]
\end{lem}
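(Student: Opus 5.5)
The plan is to obtain $Z$ and the $\alpha_i$ from the minimizer of a scale-invariant convex-type potential on the cone of positive definite matrices. First a consistency remark: summing the trace conditions gives $\mathrm{Trace}(T_1+\cdots+T_n)=n=\mathrm{Trace}(\mathbb{I})$. So the $M_i$ must be $n\times n$, which is the situation of \Cref{randomdet} and of the reward design. On the open cone $\mathcal{P}$ of $n\times n$ positive definite matrices $X$, define
\[ g(X) \;=\; \sum_{i=1}^n \ln \mathrm{Trace}(M_i X) \;-\; \ln\det X . \]
Each $\mathrm{Trace}(M_i X)$ is strictly positive because $M_i,X\succ 0$, so $g$ is smooth on $\mathcal{P}$. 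Moreover $g(cX)=g(X)$ for $c>0$, since both terms shift by $n\ln c$. We may therefore restrict $g$ to the slice $\mathcal{P}_1=\{X\succ 0:\det X=1\}$.

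The key step, and the one I expect to be the main obstacle, is to show that $g$ attains its minimum on $\mathcal{P}_1$. I would prove coercivity directly. Let $\mu=\min_i \lambda_{\min}(M_i)>0$. Then $\mathrm{Trace}(M_iX)\ge \mu\,\mathrm{Trace}(X)$, so on $\mathcal{P}_1$
\[ g(X)\;\ge\; n\ln\mu + n\ln \mathrm{Trace}(X). \]
Now take a sequence in $\mathcal{P}_1$ that leaves every compact subset of $\mathcal{P}_1$. Because the determinant is fixed at $1$, some eigenvalue must tend to $0$ or to $\infty$. If one tends to $0$, the product constraint forces another to tend to $\infty$. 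In either case $\mathrm{Trace}(X)\to\infty$, hence $g\to\infty$. So the sublevel sets of $g$ on $\mathcal{P}_1$ are closed and bounded inside $\mathcal{P}_1$. They are bounded because $\mathrm{Trace}(X)$ is bounded there. They are bounded away from the boundary because, with $\det X=1$ and all eigenvalues bounded above, the smallest eigenvalue is bounded below. Hence a minimizer $X_*$ exists. By scale invariance, $X_*$ is also an unconstrained critical point of $g$ on the open set $\mathcal{P}$.

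Next I compute the gradient, using $\nabla_X \mathrm{Trace}(M_iX)=M_i$ and $\nabla_X\ln\det X=X^{-1}$. The first-order condition at $X_*$ reads
\[ \sum_{i=1}^n \frac{M_i}{\mathrm{Trace}(M_iX_*)} \;=\; X_*^{-1}. \]
Set $\alpha_i=1/\mathrm{Trace}(M_iX_*)>0$, and factor $X_*=ZZ^T$ with $Z$ invertible, for instance $Z=X_*^{1/2}$. Conjugating the critical point equation by $Z$ gives
\[ Z^T\Big(\sum_i \alpha_i M_i\Big)Z = Z^T Z^{-T}Z^{-1}Z=\mathbb{I}. \]
This is exactly $T_1+\cdots+T_n=\mathbb{I}$. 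Finally, by cyclicity of the trace,
\[ \mathrm{Trace}(T_i)=\alpha_i\,\mathrm{Trace}(M_iZZ^T)=\alpha_i\,\mathrm{Trace}(M_iX_*)=1, \]
which completes the construction.

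Convexity of $g$ is not needed for existence, since only a critical point is required. Still, I would remark that $g$ is geodesically convex on $\mathcal{P}$, which gives uniqueness of the scaling up to the obvious symmetries. This is the route of \cite[Thm 4.5.9]{barvinok2016combinatorics}.
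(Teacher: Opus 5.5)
Your proof is correct and complete, but it goes through a different variational problem than the paper. The paper cites Barvinok and spells the argument out in the reward-implementation section. It minimizes the scalar-side function $f(t)=\ln\det\bigl(\sum_i e^{t_i}M_i\bigr)$ over the hyperplane $t_1+\cdots+t_n=0$, and takes $\alpha_i=e^{t_i}$ and $Z=S^{-1/2}$ with $S=\sum_i e^{t_i}M_i$. The first-order condition makes all the quantities $\mathrm{Trace}(S^{-1}e^{t_i}M_i)$ equal. Since they sum to $\mathrm{Trace}(\mathbb{I})=n$, each equals $1$; this is where that route uses $n$ matrices of size $n\times n$, just as your scale invariance of $g$ does. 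You instead minimize the matrix-side potential $g(X)$ and read off $\alpha_i=1/\mathrm{Trace}(M_iX_*)$ and $Z=X_*^{1/2}$. The two potentials are partial minimizations of one function, $\Phi(x,X)=\sum_i x_i\,\mathrm{Trace}(M_iX)-\ln\det X-\sum_i\ln x_i$. Eliminating $X$ (at $X=(\sum_i x_iM_i)^{-1}$) gives $n+\ln\det\bigl(\sum_i x_iM_i\bigr)-\sum_i\ln x_i$, which is $n+f$ on the hyperplane. Eliminating $x$ (at $x_i=1/\mathrm{Trace}(M_iX)$) gives $n+g(X)$. So the optimizers correspond and produce the same scaling. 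Your route buys an elementary existence proof: coercivity on $\det X=1$ comes from $\mathrm{Trace}(M_iX)\ge\mu\,\mathrm{Trace}(X)$, with no convexity and no multiplier, and both normalization conditions drop out of the gradient equation in one line. The paper's route buys computability. It has only $n-1$ real unknowns, and $f$ is convex in the ordinary Euclidean sense (Barvinok's Lemma 4.5.7), which is what justifies the BFGS implementation with the explicit gradient. Your $g$ lives on an $n(n+1)/2$-dimensional space and is only geodesically convex. One small inaccuracy: your last sentence attributes the geodesic-convexity argument to Barvinok's Theorem 4.5.9. The argument the paper takes from there is the Euclidean-convex minimization of $f$, not of $g$.
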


We are interested in counting the number of real solutions to the following system of equations:
\begin{equation} \label{target}
   \norm{A_1 x}^2= \norm{A_2 x}^2 = \ldots = \norm{A_nx}^2= 1  \; \text{for} \; i=1,2,\ldots,n. 
\end{equation}
It is clear that the number of solutions to \Cref{target} is equal to the number of solutions to \Cref{targett}
\begin{equation} \label{targett}
   \norm{A_1 Z x}^2= \norm{A_2 Z x}^2 = \ldots = \norm{A_n Zx}^2= 1  \; \text{for} \; i=1,2,\ldots,n. 
\end{equation}
for any invertible matrix $Z$.  Also note that scaling all equations at once does not change the number of solutions. Using Lemma \ref{scaling}, and noting $\mathrm{Trace}(Z^TA_i^TA_iZ)=\norm{A_iZ}_F^2$, we can find a $Z$ such that 
\[ \frac{1}{\norm{A_1Z}_F^2} Z^TA_1^TA_1Z + \frac{1}{\norm{A_2Z}_F^2} Z^TA_2^TA_2Z + \ldots + \frac{1}{\norm{A_nZ}_F^2} Z^TA_n^TA_nZ = \mathbb{I}   \]
Note that for any matrix $A$ solving 
\[ \norm{A x}^2 =1 \; \text{is equivalent to solving} \; \norm{\frac{1}{\norm{A}}A x}^2 = \frac{1}{\norm{A}^2} \]
So, we define $C_i = \frac{1}{\norm{A_iZ}} A_iZ$ \Cref{target}. 
\begin{align*} 
\begin{split}
   \norm{C_i}=1 \; \text{for each} \; i, \; \text{and} \; C_1^T C_1 + C_2^T C_2 + \ldots + C_n^T C_n=\mathbb{I}   \\
     c_i = \frac{1}{\norm{A_i Z}^2} \; \text{for} \; i=1,2,\ldots, n \\ 
    \text{Count} \; x \in \mathbb{R}^n \; \text{such that} \; \norm{C_i x}^2 = c_i 
\end{split}
\end{align*}
Now to make our computations easier, we do the following:  
\begin{enumerate}
    \item Compute  $g := \left( \frac{1}{\norm{A_1Z}^2} + \frac{1}{\norm{A_2Z}^2} + \cdots + \frac{1}{\norm{A_nZ}^2} \right)^{\frac{1}{2}} $.
    \item Do a scaling of variables and the equation by setting $x = \frac{\sqrt{n}}{g} x$. After this, an equation of the form $\norm{A x}^2 = \beta$ becomes $\norm{A x}^2 = \frac{n \beta}{g^2}$.
\end{enumerate}

In summary, we defined $ C_i = \frac{1}{\norm{A_iZ}} A_iZ \;  ,  \; g = \left( \sum_{i=1}^n \frac{1}{\norm{A_iZ}^2} \right)^{\frac{1}{2}}$. Now the system of equations becomes the following:
\begin{align} \label{target-normalized}
\begin{split}
   \norm{C_i}=1 \; \text{for each} \; i, \; \text{and} \; C_1^T C_1 + C_2^T C_2 + \ldots + C_n^T C_n=\mathbb{I}   \\
   (c_1,c_2,\ldots,c_n) \in (\mathbb{R}_{+})^n \; , \; c_i = \frac{n}{g^2 \norm{A_i Z}^2 }  \; . \;  c_1 + c_2 + \cdots + c_n = n \\
    \text{Count} \; x \in \mathbb{R}^n \; \text{such that} \; \norm{C_i x}^2 = c_i.
\end{split}
\end{align}
Our model of randomness is set-up as follows: We use $c=(c_1,c_2,\ldots,c_n)$ to denote our $n$-tuple with $c_i >0$ and $c_1+c_2+\ldots+c_n=n$, and $\mathcal{C}=(C_1,C_2,\ldots,C_n)$ such that $C_1^T C_1 + C_2^T C_2 + \ldots + C_n^T C_n=\mathbb{I}$. Then we have random Gaussian matrices $A_i$ that are defined as 
\begin{align} \label{relax-normalized}
\begin{split}
 A_i = C_i + \delta X_i \; \text{where} \; X_i \; , \;
\norm{A_1x}^2 = c_1 \; , \; \norm{A_2x}^2 = c_2 \; , \cdots , \; \norm{A_nx}^2= c_n  
\end{split}
\end{align}
where $X_i$ has independent $\mathcal{N}(0,1)$ entries. We will use the expected number of solutions to \Cref{relax-normalized} as a proxy to exactly counting the number of real solutions of \Cref{target-normalized}. Now suppose we have solution $x \in \mathbb{R}^n$ to \Cref{relax-normalized}, then
\begin{equation} 
   n =  \sum_{i=1}^n x^T A_i^T A_i x =  \norm{x}^2 +  x^T \left( \delta \sum_{i=1}^n (X_i^T C_i + C_i^T X_i)  + \delta^2 \sum_{i=1}^n X_i^T X_i \right) x  
\end{equation}
Note that  $X := \delta \sum_{i=1}^n (X_i^T C_i + C_i^T X_i)  + \delta^2 \sum_{i=1}^n X_i^T X_i $  is a real symmetric matrix, and we have $n = \norm{x}^2 + x^T X x$. Moreover,   $X \sim \delta^2 n \; \mathbb{I}$ with high  probability due to very strong concentration properties of Gaussians. Therefore, with high probability, we have
\begin{equation} \label{Gaussloci}
    n \sim (1 + \delta^2 n) \norm{x}^2. 
\end{equation}
Now we can give a snapshot of our plan for implementation of this reward function: Lemma \ref{scaling} can be formulated as a convex optimization program, and solved using quasi-Newton methods efficiently. After completing the scaling, we perform a monte-carlo approximation of the Kac-Rice formula corresponding to \Cref{relax-normalized} where $x$ will be sampled from the region with $ n(1-\delta) \leq \norm{x}^2 \leq n(1+\delta)$. The computation of the conditional expectation in Kac-Rice formula at a given sample $x$ is cumbersome, we will use a trick from \cite{feliu2022kac} alongside importance sampling to simplify that computation. These steps are explained at \Cref{rewardimp}.

\section{Reinforcement Learning Setup}
With the theoretical foundation for our probabilistic root-counting proxy established in previous section, the central challenge shifts to effectively navigating the complex parameter space of the power-flow equations.  By mapping the system matrices to the environment's state space and bounding the allowed perturbations as actions, the agent learns to iteratively reshape the network parameters to maximize the expected number of real solutions. The remainder of this section details the actor-critic architecture used to traverse this space and the precise computational implementation of the reward function.
\subsection{Problem formulation and Actor-Critic Architecture}
We formulate the search for power-flow equations with many real solutions as a reinforcement learning problem using a twin-delayed actor-critic architecture \cite{fujimoto2018addressing}. and verify the outcomes in experiments with small instances with Julia Homotopy \cite{juliahomotopy}. The reinforcement learning environment is defined as follows
\begin{itemize}
    \item  State Space: The state-space is modeled as the collection of all $n\times n$ matrices with entries in the range $[-1, 1]$. This representation is valid because scaling all entries of all matrices in the power-flow equations by the same scalar does not change the solutions.
    \item Action Space: While the action-space is theoretically the same as the state-space, actions are restricted to perturbations of a capped size to ensure robust updates. Specifically, the agent can only modify a matrix entry by at most $\hat{a}$ during a single step.
    \item Reward: At each step, the agent outputs a system of matrices $\mathcal{A}=(A_{1},...,A_{n})$. The reward is then calculated by running the probabilistic Monte Carlo approximation on this system to estimate the real solution count.
    \item Episode Structure: Because the reward function relies on an approximation, defining a strict terminal condition is difficult. Therefore, the episode length is treated as a hyperparameter.
    \item Objective: The agent's goal is to improve some starting system of matrices $\mathcal{A}_s=(A_1^s,\ldots,A_n^s)$ over a sequence of steps and arrive at a system of matrices $\mathcal{A}_f=(A_1^f,\ldots,A_n^f)$ such that $ \norm{A_1^f x}_2^2 = \cdots = \norm{A_n^f x}_2^2 = 1$ has many more real solutions than $\norm{A_1^s x}_2^2 = \cdots = \norm{A_n^s x}_2^2 = 1 $.

\end{itemize}

\subsection{Reward Function Implementation} \label{rewardimp}
Lemma \ref{scaling} can be proved by minimizing function $f: \mathbb{R}^n \rightarrow \mathbb{R}$ defined below, where $Q_1, \dots, Q_n$ are $n \times n$ positive definite matrices.
\[ f (t_1, \dots, t_n) = \ln \det (e^{t_1} Q_1 + \dots + e^{t_n} Q_n) \; \text{s.t.} \quad t_1 + \dots + t_n = 0 \]
This minimization gives us the $\alpha_i$'s and $Z$ that satisfy Lemma \ref{scaling} such that $\alpha_i = e^{t_i}$ and for $S = e^{t_1}Q_1 + \dots + e^{t_n}Q_n$, $Z = S^{-1/2}$. The function $f$ is shown to be convex in \cite[Lemma 4.5.7]{barvinok2016combinatorics}, so by eliminating the constraint with setting $t_n = -t_1 - \dots -t_{n - 1}$ we can use first-order methods.
For $t_1, \ldots, t_{n - 1}$, let $E_n = e^{-t_1 - \dots -t_{n - 1}} Q_n$ and $S = e^{t_1} Q_1 + \dots + e^{t_{n - 1}} Q_{n - 1} + E_n$.
The gradient is
\[ \nabla f(t_1, \ldots, t_{n - 1}) = \nabla \ln \det (S)  = 
    \begin{bmatrix}
        \mathrm{Trace}(S^{-1}(e^{t_1} Q_1 - E_n)) \\
        \mathrm{Trace}(S^{-1}(e^{t_2} Q_2 - E_n)) \\
        \vdots \\
        \mathrm{Trace}(S^{-1}(e^{t_{n - 1}} Q_{n - 1} - E_n))
    \end{bmatrix}
\]

To connect this back to our setup, let $\mathcal{A}=(A_1,\ldots,A_n)$ be an unnormalized system of matrices.
In place of the positive definite matrices $Q_1, \dots, Q_n$ we will use positive semidefinite matrices $A_1^T A_1, \dots, A_n^T A_n$.
The normalization procedure will return matrices $T_i = \alpha_i Z^T A_i^T A_i Z$, where we note that $c_i = \frac{1}{\norm{A_i Z}^2} = \frac{1}{\mathrm{Trace}(Z^T A_i^T A_i Z)} = \alpha_i$. Though Barvinok's normalization procedure assumes positive definite matrices and we can only guarantee positive semidefinite matrices, our normalization calculations never ran into an undefined $f$.
Gurvits and Samorodnitsky \cite[Theorem 1.10]{Gurvits2002} give a condition that a system of positive semidefinite matrices must meet for the normalization procedure to work.

With all this in hand, we use the BFGS method \cite{nocedal2006numerical}  via SciPy to do the unconstrained optimization. For each system size $n$, 50 systems were generated and normalized, and the plots show the average results for all 50 generated systems. A system $\mathcal{A}=(A_1,\ldots,A_n)$ was generated by creating $n$ Gaussian random matrices with independent $\mathcal{N}(0,1)$ entries.
\cref{fig:bfgs-time} gives the time taken to normalize the systems, and \cref{fig:bfgs-trace} and \cref{fig:bfgs-summation} give how close the normalized system is to the Lemma \ref{scaling} conditions.
\cref{fig:bfgs-summation} shows the euclidean distance between the system-summation matrix, so $A_1^T A_1 + \ldots + A_n^T A_n$, and $\mathbb{I}$.
\cref{fig:bfgs-trace} gives the euclidean distance between the vector formed by taking the traces of our normalized matrices, so $[ \mathrm{Trace}(A_1^T A_1), \ldots, \mathrm{Trace}(A_n^T A_n) ]$, and the 1-vector. As far as accuracy is concerned, the trace distance is the metric to be wary of given its seesaw growth toward $1\mathrm{e}{-6}$ in the second half of \cref{fig:bfgs-trace}.

\begin{figure}
\centering
\begin{subfigure}{.45\textwidth}
  \centering
  \includegraphics[width=.8\linewidth]{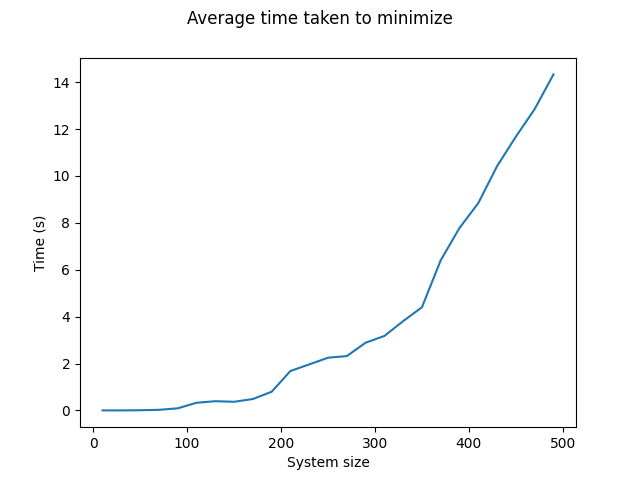}
  \caption{The average amount of time taken to normalize the system of matrices.}
  \label{fig:bfgs-time}
\end{subfigure}%
\hfill
\begin{subfigure}{.45\textwidth}
  \centering
  \includegraphics[width=.8\linewidth]{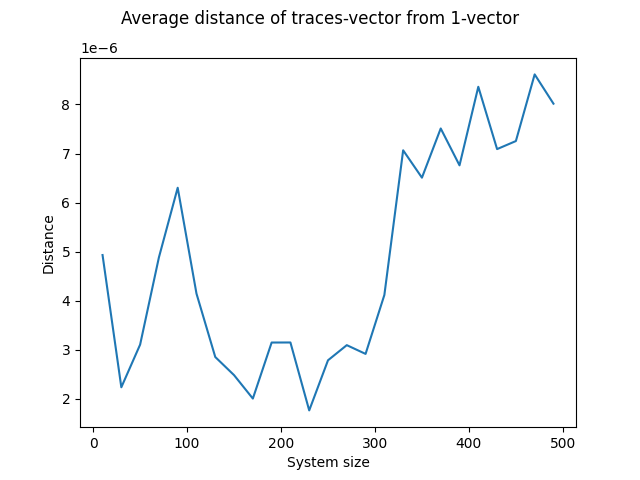}
  \caption{The average euclidean distance of the system matrix traces from the 1-vector.}
  \label{fig:bfgs-trace}
\end{subfigure}
\hfill
\begin{subfigure}{.6\textwidth}
  \centering
  \includegraphics[width=.8\linewidth]{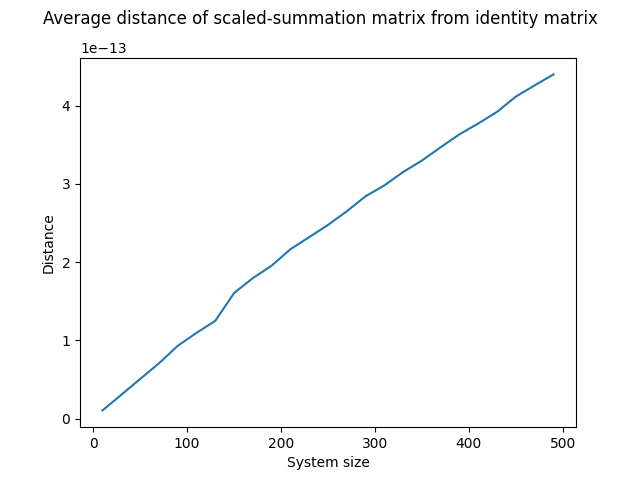}
  \caption{The average euclidean distance of the system matrices summation from the identity matrix.}
  \label{fig:bfgs-summation}
\end{subfigure}
\label{fig:bfgs-scaling}
\end{figure}
\begin{table}[htbp]
\centering
\resizebox{\linewidth}{!}{%
\begin{tabular}{l|rr|rr|rr|c}
\cline{2-7}
 & \multicolumn{2}{c|}{\textbf{Time Taken (s)}} & \multicolumn{2}{c|}{\textbf{Trace Distance}} & \multicolumn{2}{c|}{\textbf{Summation Distance}} & \\ \hline
\textbf{System Size} & \textbf{BFGS} & \textbf{Newton-CG} & \textbf{BFGS} & \textbf{Newton-CG} & \textbf{BFGS} & \textbf{Newton-CG} & \textbf{Success Rate} \\ \hline
50  & 0.0079 & 0.11 & 1.3e-5 & 8.9e-6 & 5.0e-14 & 5.0e-14 & 0.3  \\ \hline
150 & 0.37   & 10   & 7.6e-7 & 5.8e-7 & 1.6e-13 & 1.6e-13 & 0.08 \\ \hline
250 & 2.3    & 82   & 8.8e-6 & 7.5e-6 & 2.5e-13 & 2.5e-13 & 0.2  \\ \hline
\end{tabular}%
}
\caption{Accuracy improvements made by using a Newton-conjugate corrector for 5 steps. The same systems used in \cref{fig:bfgs-scaling} were used here. Only those systems that were successfully corrected were averaged over.}
\label{table:newton-cg}
\end{table}
In an attempt to improve the trace distance we used the Newton-conjugate second-order optimization method \cite{nocedal2006numerical} for five steps after BFGS, but the increase in computation time was not worth the small accuracy improvements as \cref{table:newton-cg} demonstrates.
The computation time increase came from $f$'s expensive Hessian
\[ \frac{\partial^2 f}{\partial t_i \partial t_j}(x) = -\mathrm{Trace}(S^{-1} (e^{x_i}Q_i - E_n) S^{-1} (e^{x_j}Q_j - E_n)) + \mathrm{Trace}(S^{-1} E_n) \]
Furthermore, precision errors prevented a majority of the systems from being improved as can be seen from the final column in \cref{table:newton-cg}. As a result, we passed on using the Newton correction.

After this normalization is accomplished, we perform several mathematical tricks to simplify the Monte Carlo computation. We explain details of Monte Carlo computation in \Cref{montecarlodetails}. The main take-home point is that Monte Carlo computation is highly parallelize and thus scalable whereas computer algebra alternatives do not scale well.

The reward function is essentially a partition function (as in statistical physics) that is computed in a probabilistic way. So, one cannot expect it to differentiate systems with similar number roots. We aim to use it to differentiate between systems with many roots (our target) versus the ones with few roots (initial state).  To validate, we experimented utilizing the ground truth from computational algebra software in small instances. In the process we also tuned hyperparameters introduced by the Monte-Carlo implementation. The implementation details and  hyperparameter configurations are discussed in \cref{montecarlodetails}.
The experimentation is described in \cref{montecarlohyperparameters}.

\subsection{Monte Carlo Implementation Details} \label{montecarlodetails}
We use $\mathbb{E}N(\mathcal{B}, \delta , c)$ denote the average number of solutions to the system of equations 
\[  \norm{A_1 x}^2 = c_1 \; ,  \norm{A_2x}^2 = c_2 \; , \; \cdots , \; \norm{A_nx}^2 =c_n  \]
Following the discussion  \Cref{rewarddesign}, the formula is
\[ \int_{\mathbb{R}^n} \int_{K_c(x)} \abs{\det Z^{'}(x)} \; d \mu (X_1,X_2,\ldots,X_n)  \; d x.\]
Now recall that the weight $\mu(K_c(x))$ is very small unless $x$ satisfies \Cref{Gaussloci}, that is we must have
\[ \norm{x} \sim \frac{\sqrt{n}}{\sqrt{1+\delta^2 n}} .\]
We should also note that for $A_i = C_i + \delta X_i$ to serve as decent proxy to $B_i$ we must have
\[ \delta n = \mathbb{E} \norm{\delta X_i}_F  < \norm{C_i} = 1  \Rightarrow \delta < \frac{1}{n} . \]
So, we restrict $0 < \delta \leq \frac{1}{n}$ and in this parameter regime we must have $\norm{x} \sim \sqrt{n}$ in order for $\mu(K(x))$ to be non-negligible. We will add an $ \varepsilon > \frac{4}{\sqrt{n}}$ tolerance, and we will aim to sample vectors uniformly from the annulus
\[ (1-\varepsilon) \sqrt{n}  < \norm{x} < (1+\varepsilon) \sqrt{n}  \]
Here we recall two facts about the Gaussian measure:  (1) Gaussian measure is rotationally invariant, (2) For standard Gaussian vector $x \in \mathbb{R}^n$, we have 
\[  \mathbb{P} \{ (1-\varepsilon) \sqrt{n} \leq \norm{x} \leq \frac{\sqrt{n}}{1-\varepsilon}   \}  \geq 1 - 2 e^{-\frac{\varepsilon^2n}{4}} \]
Since we fix $ \varepsilon > \frac{4}{\sqrt{n}}$ we have
\[  \mathbb{P} \{ (1-\varepsilon) \sqrt{n} \leq \norm{x} \leq \frac{\sqrt{n}}{1-\varepsilon}   \}  \geq 1 - 2 e^{-4} \geq  0.97 \]
We can use a conditional density scaled with $\frac{1}{0.97}$, but we prefer not to bother with this small detail, we'll directly use standard Gaussians to sample from the annulus
\[ (1-\varepsilon) \sqrt{n}  < \norm{x} < (1+\varepsilon) \sqrt{n} \]

After the point $x$ is sampled, we need to work on conditional expectations in Kac-Rice formula. We'll use \cite[Thm 1.1]{feliu2022kac} and standard importance sampling trick to simplify the conditional expectation expression.  More precisely, let  $A_i= B_i + \delta X_i$ where $X_i$ are $n \times n$ Gaussian matrices with independent entries with mean zero and variance $1$. These $A_i$ can be sampled using a standard random matrix generator. However, these random do not satisfy the conditioning assumption in the expression
\[  \mathbb{E} \left( \abs{det (Z^{'}(x))} \; |  \; Z(x) = (c_1,c_2,\ldots,c_n) \right) \; p_Z(c_1,c_2,\ldots,c_n)\]

To remedy this situation, we will use \cite[Thm 1.1]{feliu2022kac}. First, we pick a coordinate $x_i$ of the sampled vector $(x_1,x_2,\ldots,x_n)$. To make sure $x_i$ is neither too large or too small we can pick $x_i$ as follows:
\[  S_x =\{ \abs{x_j} : x_j^2 \geq \frac{1}{2}  \; , \;  1 \leq j \leq n \} \]
Then we pick $x_i$ to be element such that $\abs{x_i}$ is the median of $S_x$.  
After this we take a  look at our equations:
\[  \norm{A_1 x}^2 = c_i  \Leftrightarrow  x^T A_1^T A_1 x = c_i \]
\begin{equation} \label{free-KacandKevin}
    \norm{A_1 x}^2 = c_i  \Leftrightarrow    x_i^{-2} \left( c_i -  x^T A_1^T A_1 x + a^{1}_{i,i} x_i^2 \right) = a^{1}_{i,i} 
\end{equation} 
where with abuse of notation we use $a^{k}_{i,j}$ to denote $i,j$th entry of $A_k^TA_k$.
Now, we set
\[ g(A_j,c_j,x) :=   x_i^{-2} \left( c_j -  x^T A_j^T A_j x + a^j_{i,i} x_i^2 \right) \]
assuming the coordinate $i$ is already decided.   So, if we  sample all the entires of $A_1$ as before but $a^{(1)}_{i,i}$ and then condition $a^{(1)}_{i,i}$ on all the rest of the entries  using \Cref{free-KacandKevin} we can ensure $\norm{A_1 x}^2 = c_1$ is satisfied.  Thus, we do the following update:
\[ a^{j}_{i,i}   \leftarrow   g(A_j,c_j,x)  \; \text{for} \; j=1,2,\ldots,n. \]
In this case,  the ratio of densities in the  importance sampling will be determined by the update from already sampled $a^{1}_{i,i}$ to $g(A_1,c_1,x)$, that is:
\[  \exp \left(\frac{ - \left( g(A_1,c_1,x) - b^{1}_{i,i} \right)^2 +  \left(a^{1}_{i,i}-b^{1}_{i,i} \right)^2}{2} \right)   \]
All in all, our density update for the entire system in Kac-rice formula is 
\[ p(G(x)) := \prod_{j=1}^n exp \left(\frac{ - \left( g(A_j,c_j,x) - b^{j}_{i,i} \right)^2 + (a^{j}_{i,i}-b^{j}_{i,i})^2 }{2} \right)   \]
Now let us define $G = (g(A_1,c_1,x), g(A_2,c_2,x), \ldots, g(A_n,c_n,x))$ and let $D_{x} G$ denote the Jacobian matrix of this system of equations evaluated at $x$.  \cite[Thm 1.1]{feliu2022kac} shows that we then have the following equality:
\begin{equation} \label{freeKaconly}
  \mathbb{E} \left( \abs{det (Z^{'}(x))} \; |  \; Z(x) = (c_1,c_2,\ldots,c_n) \right) p_Z(c_1,c_2,\ldots,c_n) = \int \abs{\det(D_x G)}   \;  p(G(x))  \; \mu_1 \mu_2 \ldots \mu_n 
\end{equation}
where $\mu_1, \mu_2,\ldots,\mu_n$ denotes the independent Gaussian distribution that is used to sample $A_1,A_2,\ldots,A_n$ initially.
\begin{rem}
Strictly speaking \cite[Thm 1.1]{feliu2022kac} shows \Cref{freeKaconly} holds true for the case of uniform distribution on a bounded interval rather than Gaussian distribution that we've employed, however, the proof can be repeated verbatim to obtain the above equality. We do not indulge into this exercise for the sake of not adding more technicality, but a skeptical reader is welcome to do the exercise.
\end{rem}
So, our Kac-Rice formula now looks as follows:
\[  \mathbb{E}N(\mathcal{B}, c, \delta) =  \int_{\mathbb{R}^n} \mathbb{E} \left( \abs{det (Z^{'}(x))} \; |  \; Z(x) = (c_1,c_2,\ldots,c_n) \right) p_Z(c_1,c_2,\ldots,c_n)  \; d x \]
\[  \mathbb{E}N(\mathcal{B}, c, \delta) =  \int_{\mathbb{R}^n} \int \abs{\det(D_x G)}   \;  p(G(x))  \; \mu_1 \mu_2 \ldots \mu_n \; dx \]
where $\mu_1, \mu_2,\ldots,\mu_n$ denotes the independent Gaussian distribution that is used to sample the random matrices $A_1,A_2,\ldots,A_n$.

So, our  Monte Carlo approximation to $\mathbb{E}N(\mathcal{B}, c, \delta)$ where $0< \delta < \frac{1}{n}$ will go as follows:
\begin{enumerate}
\item Sample $M$ many $n$-tuples of matrices $(A_1(t),A_2(t).\cdots,A_n(t))_{t=1}^M$. $M$ is users choice, we recommend $M > n^2$.
\item Sample $N$ data points $x_1,x_2,\ldots,x_N \in \mathbb{R}^n$ independently from standard Gaussian measure on $\mathbb{R}^n$. Here $N$ will be determined by the user, it will determine the quality of approximation. We recommend $N > n^4$.
\item  Compute the following incremental sum for every matrix tuple for $t=1,\ldots,M$
\[ \frac{1}{N} \sum_{i=1}^N \abs{\det(D_{x_i} G(A_t,c_t))}   \;  \; p(G(A_t,c_t,x_i))  \]
\item  Return the following as our estimate to Kac-Rice formula
\[  \frac{1}{M} \sum_{t=1}^M  \left( \frac{1}{N} \sum_{i=1}^N \abs{\det(D_{x_i} G(A_t,c_t))}   \;  p(G(A_t,c_t,x_i)) \right)  \]
\end{enumerate}

Now let's write down $D_{x} G$ explicitly:
\[  D_{x} G = \begin{bmatrix}   
\nabla g(A_1, c_1, x) \\
\nabla g(A_2, c_2, x) \\
\vdots \\
\nabla g(A_n, c_n, x)
\end{bmatrix}
\]
where $\nabla$ denotes the gradient. Note that  
\[  \nabla g(A_j,c_j,x) =  - 2 x_i^{-2}  A_j^{T} A_j x + (c_j - x^T A_j^T A_j x)  \nabla x_i^{-2} \]
where $\nabla x_i^2 = (0,0,\ldots,0,-2 x_i^{-3},0,\ldots,0)$. Notice that $\nabla x_i^{-2} = - 2 x_i^{-2} \nabla \log(x_i)$.  So, we have
\[ 
 D_{x} G = - 2 x_i^{-2}  \left(   \begin{bmatrix}
       A_1^{T} A_1 x \\
      A_2^{T} A_2 x  \\
     \vdots \\
      A_n^{T} A_n x 
      \end{bmatrix}   
      +
      \begin{bmatrix}
           (c_1 - x^T A_1^T A_1 x)  \nabla \log(x_i) \\
           (c_2 - x^T A_2^T A_2 x)   \nabla \log(x_i)  \\
           \vdots \\
           (c_n - x^T A_n^T A_n x)  \nabla \log(x_i)
      \end{bmatrix}
      \right)
\]

\section{Experiments} \label{matrixexperiments}
To be able to verify the output of the agents with Julia Homotopy we used small size matrices of $n=10$.  We trained agents with varying episode lengths $L = 10, 15, 20$. The hyperparameter $\hat{a}$ was 0.01 for all experiments: this small step size is to make sure we are really testing agents ability to navigate the landscape even without the liberty to travel far from the initial starting point. We compare the three trained agents alongside  random sampling.
Each agent started from a uniformly generated matrix system and then its policy was used to repeatedly update the matrix system over 20 steps; this process was repeated 20 times, giving each agent 20 test runs of 20 steps.
\Cref{table:agent-averages} gives the average real solution count of all the updated matrix systems run through the agent policies.
\Cref{table:agent-exceed-successes} and \cref{table:agent-exceed-median} give how many test runs out of the 20 had an updated matrix system whose real solution count exceeded various solution counts of interest for power-flow modeling and the median number of steps the agent took to do so respectively.

\Cref{fig:agent-test-runs} gives three example test runs on the trained agents with each agent starting with the same randomly generated system.
\Cref{fig:agent-test-runs-3} shows agents steadily improving upon the starting system, especially for the $L=10, 20$ agents.
However, steady improvement does not appear to be always achieved by our agents; for example, see $L=20$ in \cref{fig:agent-test-runs-1} or $L=10$ in \cref{fig:agent-test-runs-2} where the agents struggle to stay in the $> 80$ real solution count range once quickly reaching it.
The most representative behavior seems to be that of $L=15$ in \cref{fig:agent-test-runs-2}: a general trend upwards, but one that involves both steep increases and decreases to the solution count over individual steps.

\begin{figure}[ht]
    \centering
    \begin{subfigure}[b]{0.4\textwidth}
        \centering
        \includegraphics[width=\textwidth]{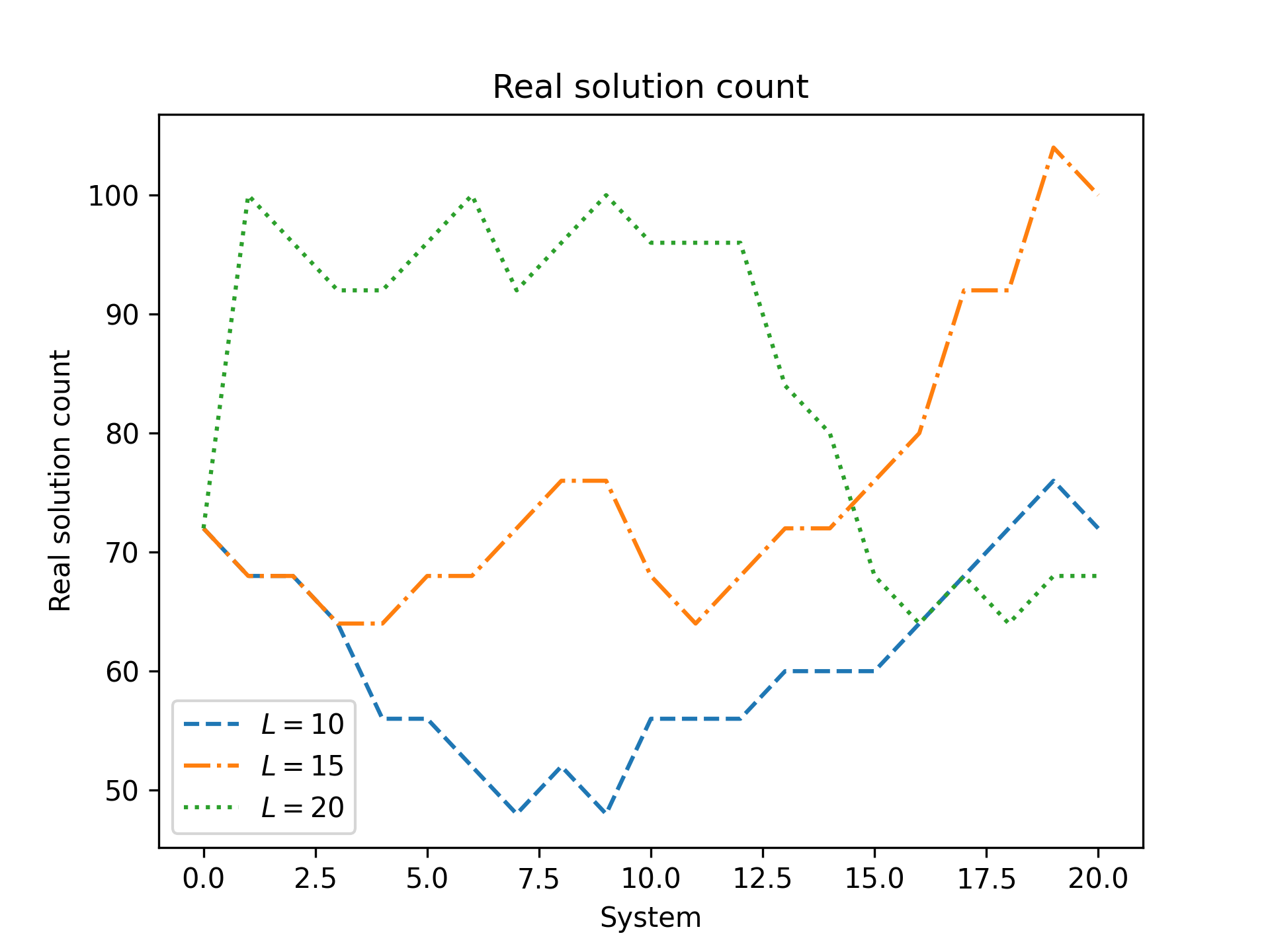}
        \caption{Test run 1}
        \label{fig:agent-test-runs-1}
    \end{subfigure}
    \hfill
    \begin{subfigure}[b]{0.4\textwidth}
        \centering
        \includegraphics[width=\textwidth]{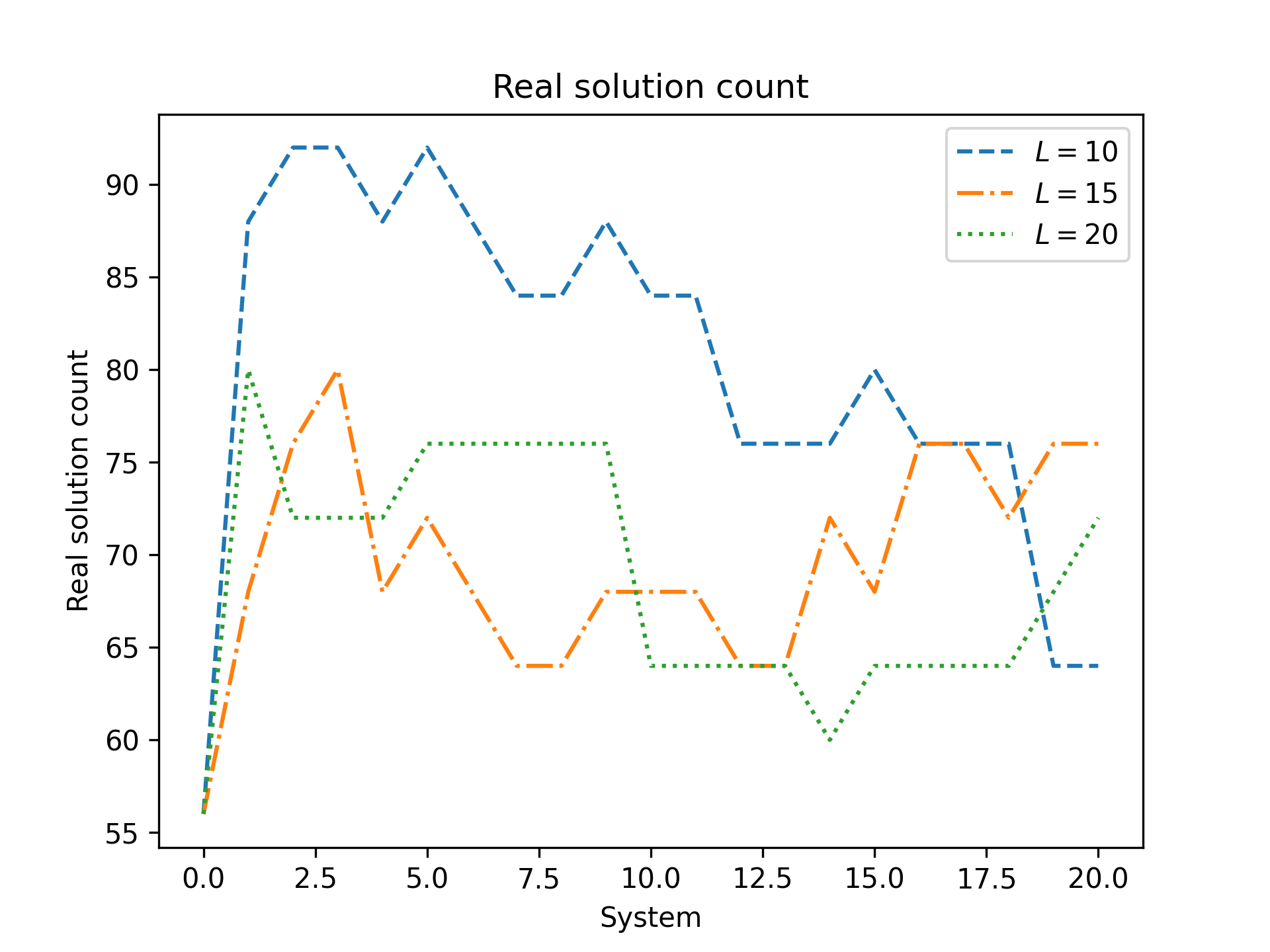}
        \caption{Test run 2}
        \label{fig:agent-test-runs-2}
    \end{subfigure}
    \hfill
    \begin{subfigure}[b]{0.4\textwidth}
        \centering
        \includegraphics[width=\textwidth]{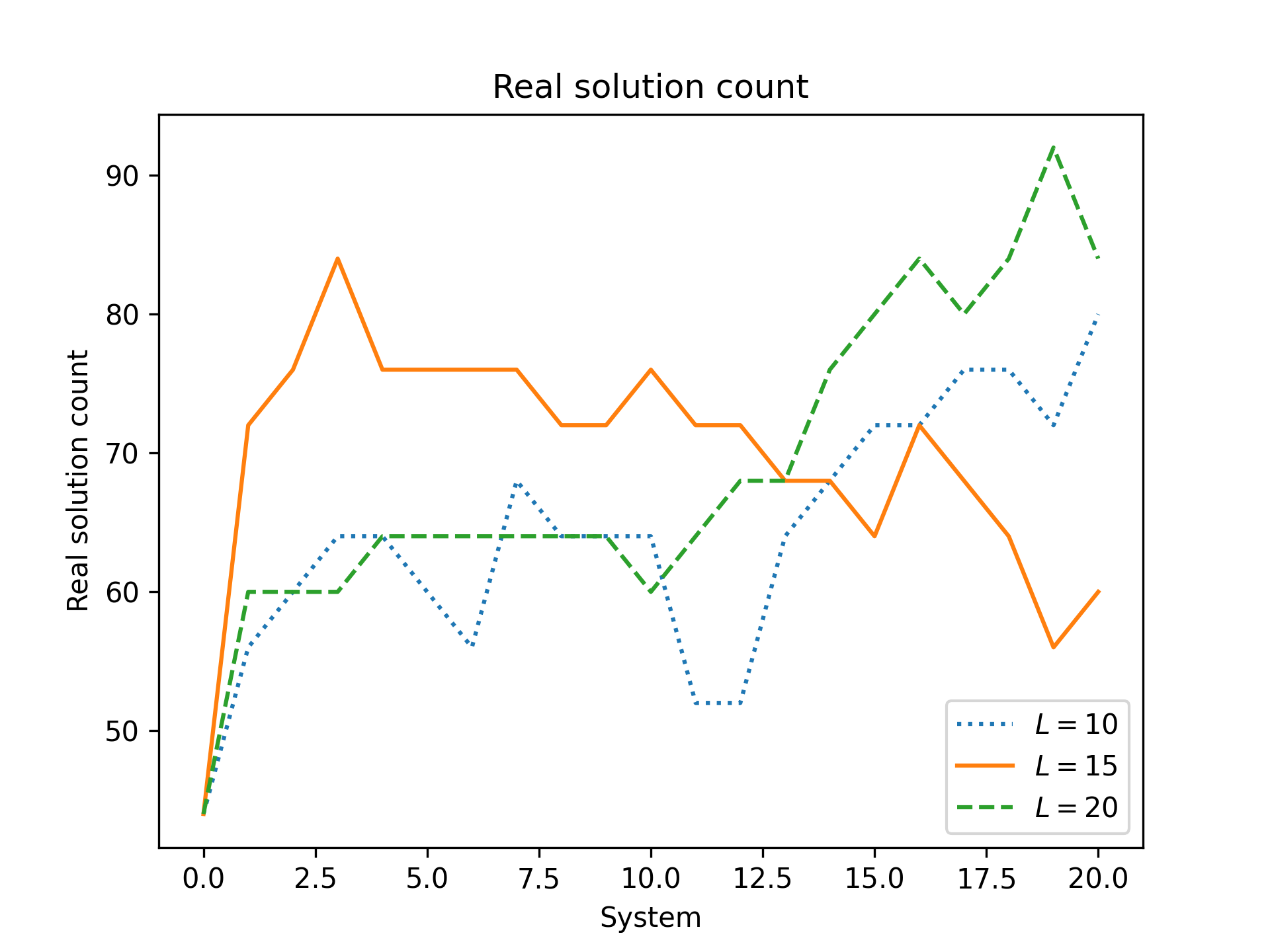}
        \caption{Test run 3}
        \label{fig:agent-test-runs-3}
    \end{subfigure}
    \caption{$L=10, 15, 20$ agent test runs}
    \label{fig:agent-test-runs}
\end{figure}

\begin{table}[ht]
    \centering
    \begin{minipage}{0.45\textwidth}
        \centering
        \begin{tabular}{llll}
        \hline
        Random & $L=10$ & $L=15$ & $L=20$ \\ \hline
        49.36  & 66.42  & 71.85  & 70.12  \\ \hline
        \end{tabular}
        \caption{Average real solution count}
        \label{table:agent-averages}
    \end{minipage}\hfill
    \begin{minipage}{0.5\textwidth}
        \centering
        \begin{tabular}{rllll}
        \hline
        Count & Random & $L=10$ & $L=15$ & $L=20$ \\ \hline
        80    & 8      & 9      & 11     & 11     \\
        90    & 4      & 6      & 7      & 7      \\
        100   & 0      & 2      & 5      & 2      \\ \hline
        \end{tabular}
        \caption{Test runs exceeding solution count}
        \label{table:agent-exceed-successes}
    \end{minipage}
\end{table}

\begin{table}[H]
    \centering
    \begin{tabular}{rllll}
    \hline
    Solution Count & Random & $L=10$ & $L=15$ & $L=20$ \\ \hline
    80                  & 12.0            & 7.0    & 6.0    & 1.0    \\
    90                  & 15.5            & 3.5    & 9.0    & 2.0    \\
    100                 & N/A             & 14.0   & 11.0   & 3.5    \\ \hline
    \end{tabular}
    \caption{Median number of steps in test runs the agents took to exceed solution counts of interest}
    \label{table:agent-exceed-median}
\end{table}
\section{Discussion}
We formulated the search for power-flow equations with many solutions as a RL problem using a novel reward function. We derived average-case behavior rigorously, and use it as a baseline. We successfully integrate several mathematical insights, convex optimization algorithms, and Monte-Carlo implementation tricks to be able implement the reward function.  The trained agents discover equations with many more solutions than what was accessible before this work; this demonstrates the potential of RL for power-flow equation modeling in particular, and for navigating complex non-linear geometries in general. We expect this will open the doors for further experimentation in power-flow modeling and for testing of conjectures in real algebraic geometry. 
\section{Acknowledgments}
A.E. and V.M are supported by NSF CCF 2414160. A.E. wants to thank Josue Tonelli Cueto for useful discussions, and to Cameron Khanpour for extra motivation to complete the paper. A.E. and V.M. are also thankful for Early Birds coffee shop where a significant portion of this work was done.

\bibliographystyle{plain}
\bibliography{prl}

\newpage
\appendix

\section{Monte-Carlo Implementation Hyperparameter Experimentation} \label{montecarlohyperparameters}
\cref{fig:evaluate-approximator} shows hyperparameter experiments for the Monte Carlo approximator on 50 systems of 10 random Gaussian matrices with independent entries centered at zero and variance 1.
For each experiment, we sampled $N=100000$ data points and $M=2500$ tuples of matrices, and then varied the value of $\delta$.
Julia Homotopy \cite{juliahomotopy} was used to calculate the true real solution count of each system, and the Monte Carlo was used to give an approximation of that.
The true and approximate counts were then normalized and sorted according to the true real solution count.
For $\delta=0.01$ in \cref{fig:evaluate-approximator-01}, three of the first ten systems have a greater approximation than the minimum approximation of the last 20 systems.
Compare this to $\delta=0.08$ where no such differentiation between systems with low and high real solution counts can be made off the approximations.

\begin{figure}[htbp]
    \centering
    \begin{subfigure}[b]{0.45\textwidth}
        \centering
        \includegraphics[width=\textwidth]{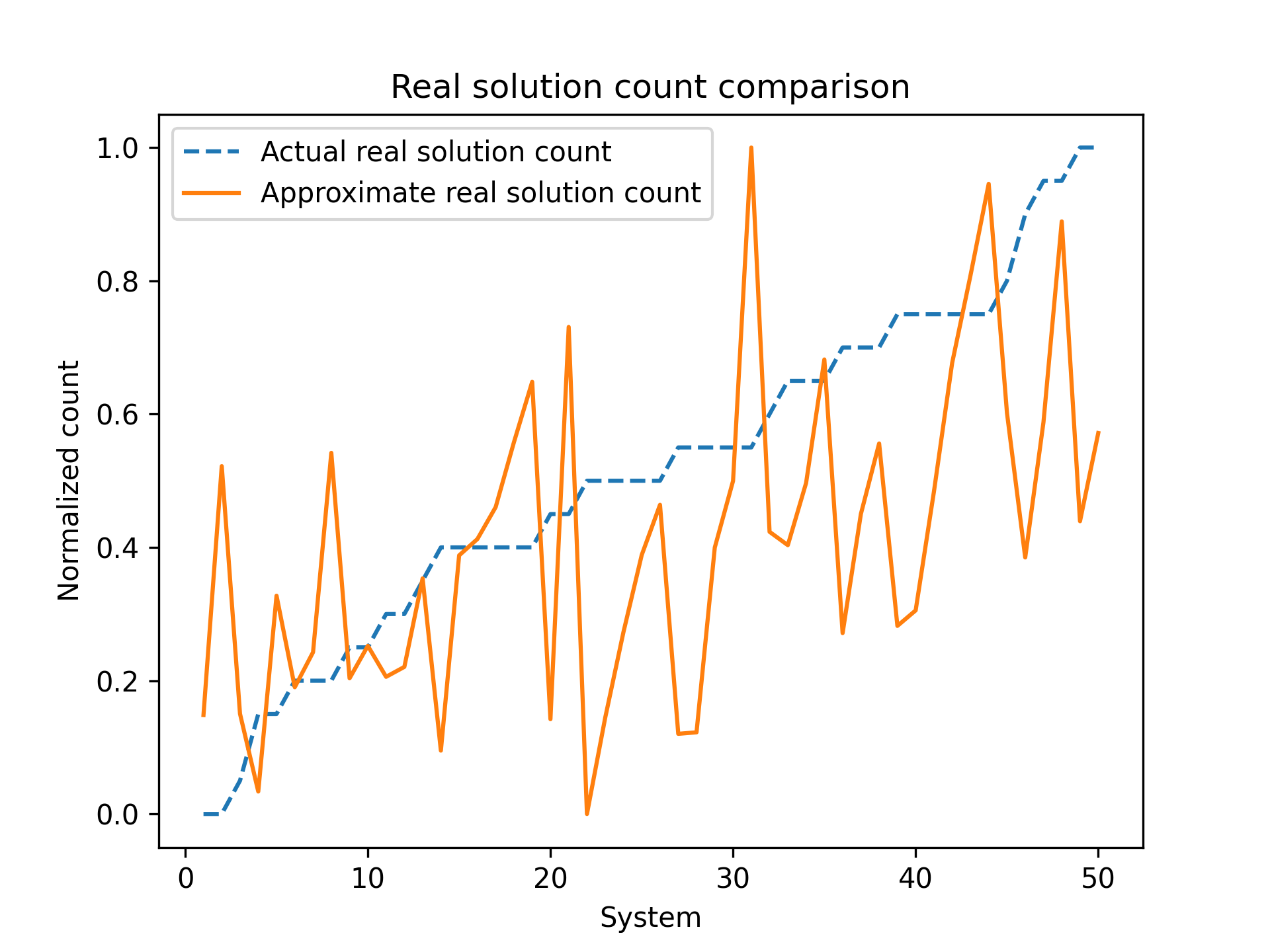}
        \caption{$\delta=0.01$}
        \label{fig:evaluate-approximator-01}
    \end{subfigure}
    \hfill
    \begin{subfigure}[b]{0.45\textwidth}
        \centering
        \includegraphics[width=\textwidth]{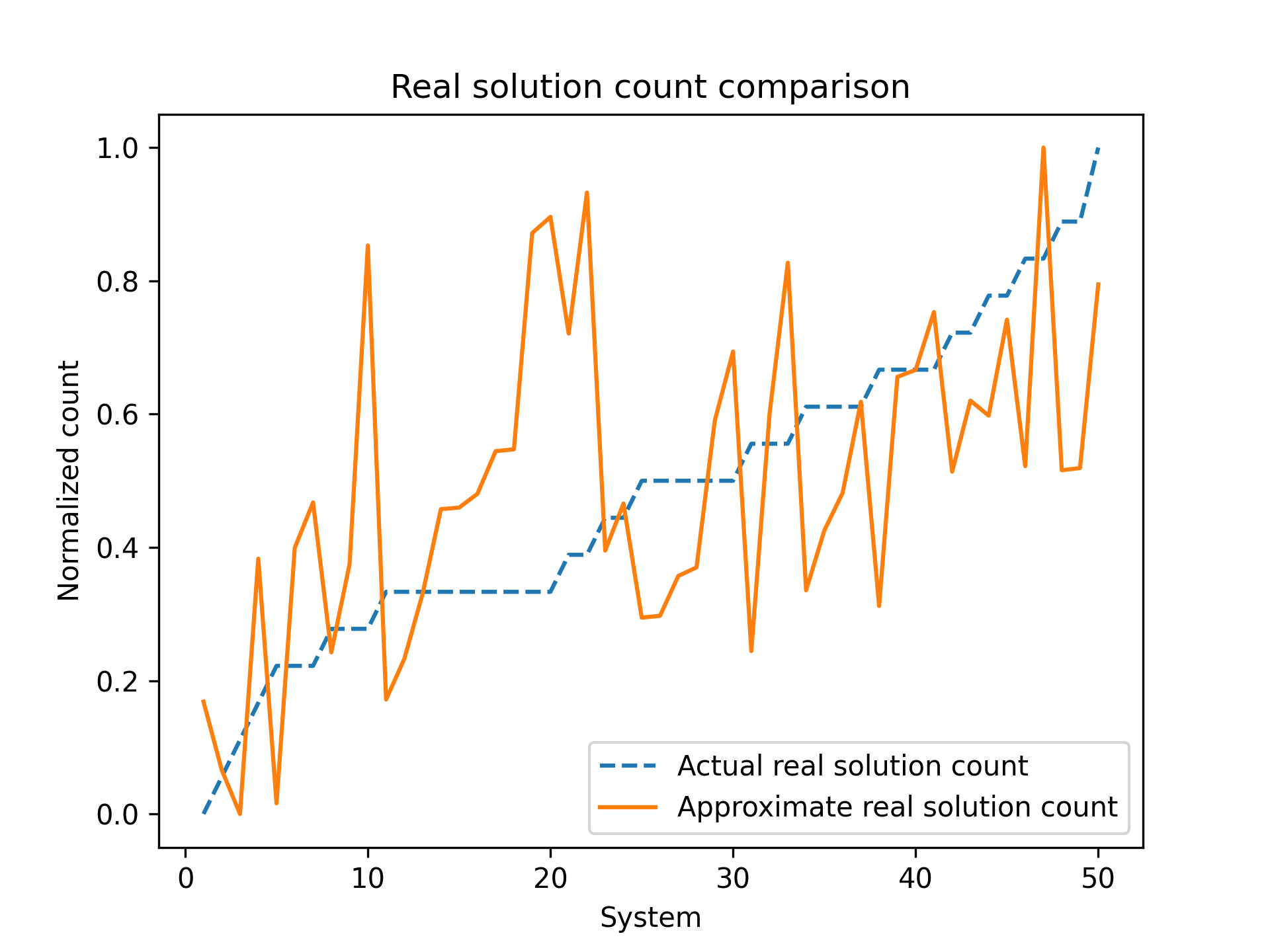}
        \caption{$\delta=0.05$}
        \label{fig:evaluate-approximator-05}
    \end{subfigure}
    \hfill
    \begin{subfigure}[b]{0.45\textwidth}
        \centering
        \includegraphics[width=\textwidth]{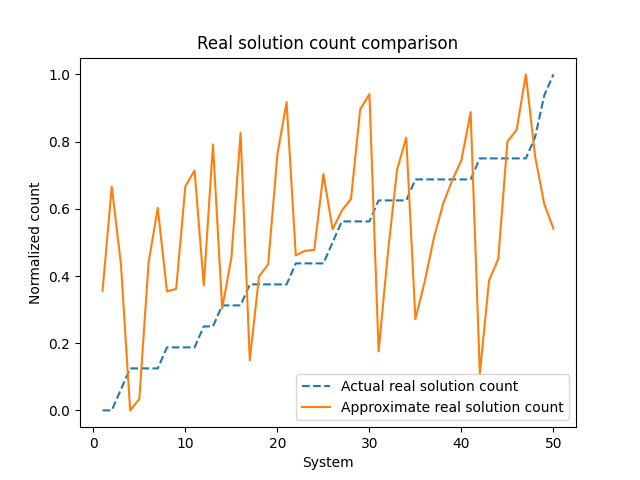}
        \caption{$\delta=0.08$}
        \label{fig:evaluate-approximator-08}
    \end{subfigure}
    \caption{Reward function comparison with $N=100000$, $M=2500$}
    \label{fig:evaluate-approximator}
\end{figure}

\section{Experimental Details}
Our code repo is available \href{https://github.com/themills22/thesis-project}{here}. The problem instance size for the state-actor space was chosen so that the matrices passed to the Monte Carlo approximator were were $10 \times 10$, meaning the Monte Carlo hyperparameters $N=100000$, $M=2500$, and $\delta=0.05$ could be reused for all the agents.
This leaves two other hyperparameters we introduced: the episode length $L$ and the action-space limit $\hat{a}$.
The episode length varied between $L=10, 15, 20$ and $\hat{a}$ was 0.01 for all agents as explained in \Cref{matrixexperiments}.
All other TD3 hyperparameters were left untouched from the Stable Baselines3 \cite{stable-baselines3} implementation of TD3.
The neural network acrhitectures used in TD3 are given in \cref{table:nn-architecture} and \cref{fig:matrix-formulation-training-plots} gives the training plots for the three agents trained.

\begin{table}[h]
\centering
\begin{tabular}{ll}
    \hline
    \textbf{Network}           & \textbf{Architecture}                                                                                 \\ \hline
    Actor $\pi_{\phi}(s)$      & Fully Connected: $D \rightarrow 500 \rightarrow 400 \rightarrow 300 \rightarrow D$, ReLU activations. \\
    Critics $Q_{\theta}(s, a)$ & Fully Connected: $D \rightarrow 500 \rightarrow 400 \rightarrow 300 \rightarrow 1$, ReLU activations. \\ \hline
\end{tabular}
\caption{Neural network architectures used where $D = 10 \times 10 \times 10$.}
\label{table:nn-architecture}
\end{table}

\begin{figure}[htbp]
    \centering
    \begin{subfigure}[b]{0.45\textwidth}
        \centering
        \includegraphics[width=\textwidth]{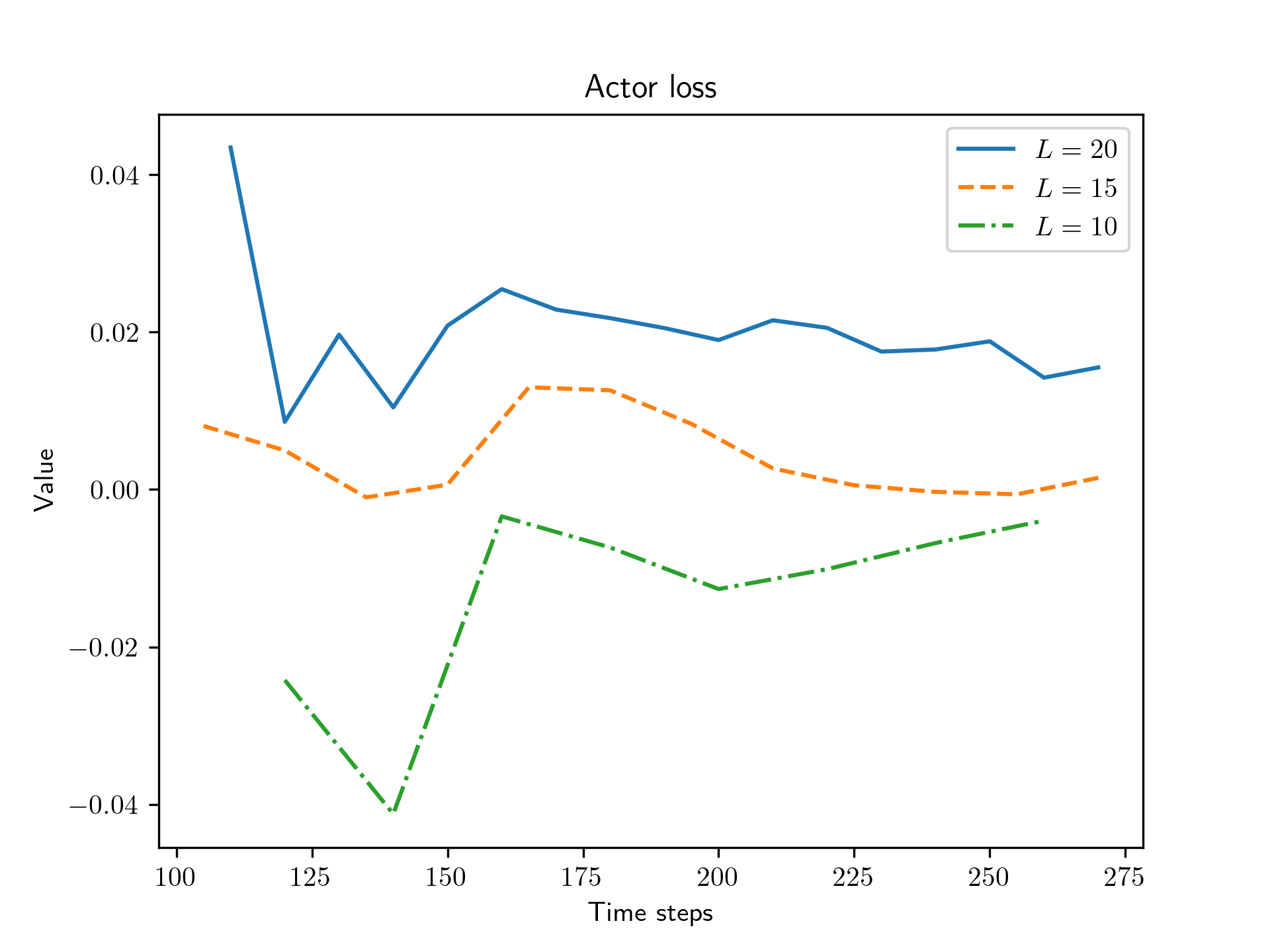}
        \label{fig:matrix-formulation-training-plots-actor-loss}
    \end{subfigure}
    \hfill
    \begin{subfigure}[b]{0.45\textwidth}
        \centering
        \includegraphics[width=\textwidth]{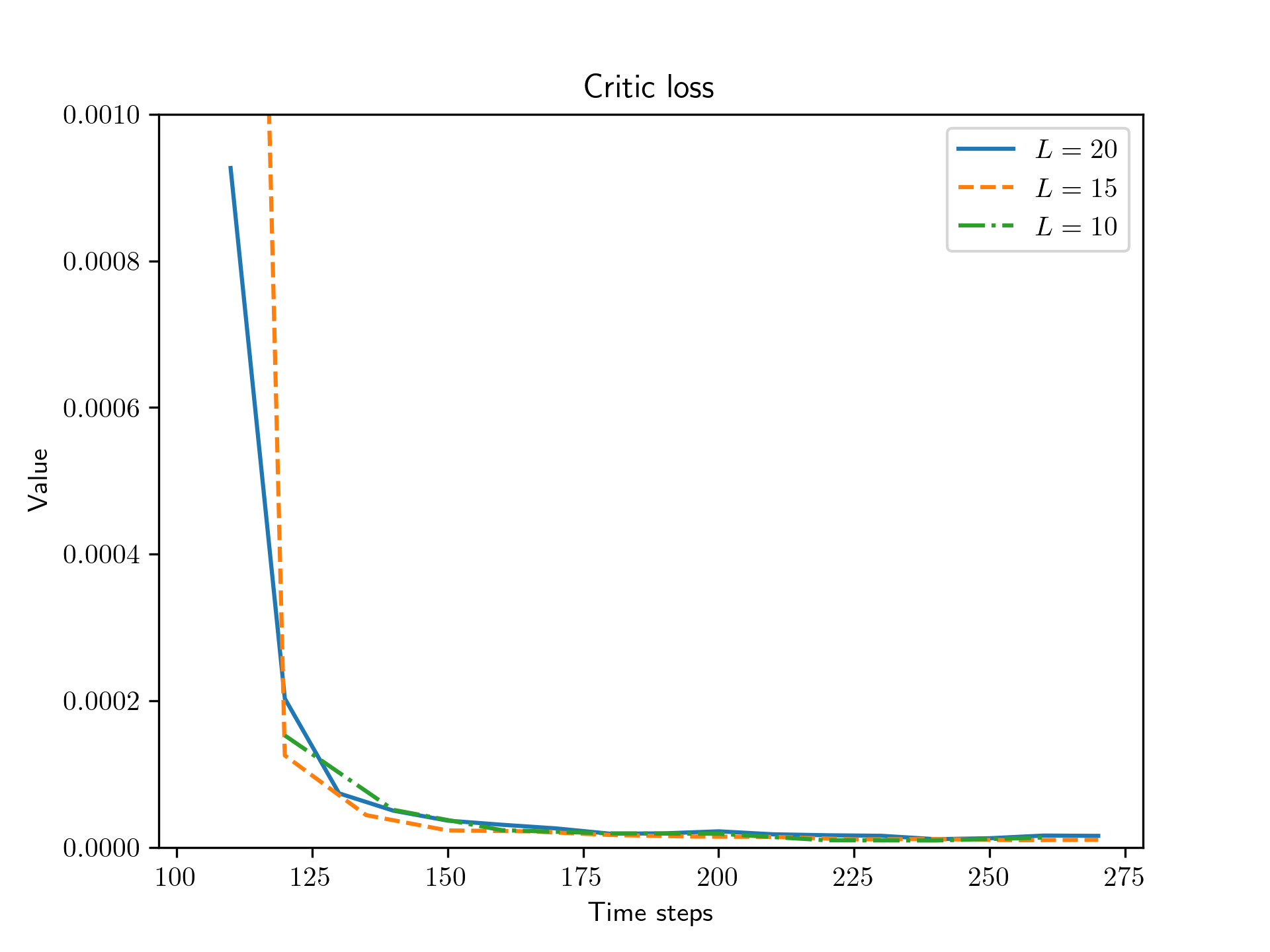}
        \label{fig:matrix-formulation-training-plots-critic-loss}
    \end{subfigure}
    \hfill
    \begin{subfigure}[b]{0.45\textwidth}
        \centering
        \includegraphics[width=\textwidth]{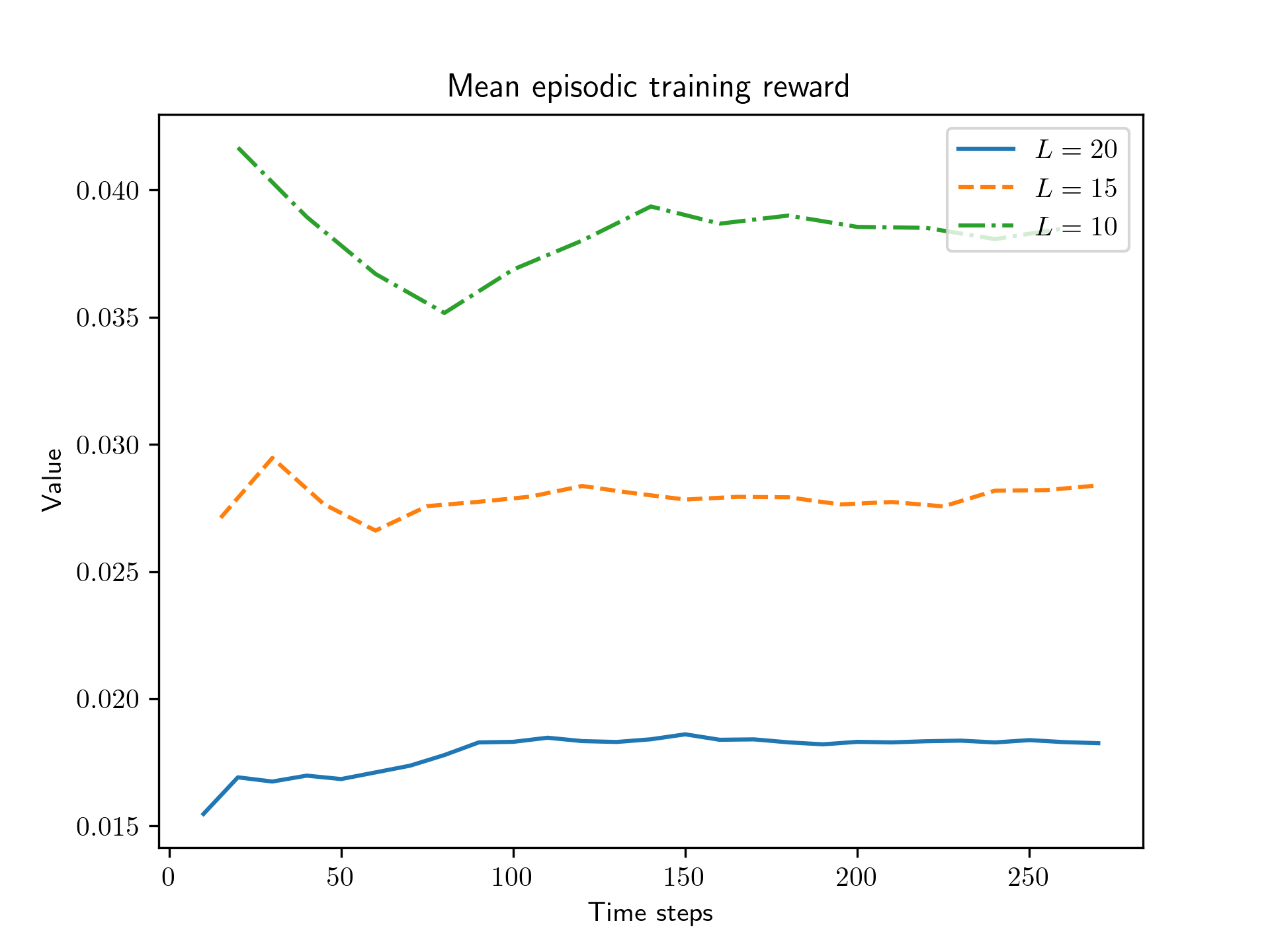}
        \label{fig:matrix-formulation-training-plots-mean-reward}
    \end{subfigure}
    \caption{Training plots for the $L=10, 15, 20$ matrix formulation agents.}
    \label{fig:matrix-formulation-training-plots}
\end{figure}

\section{Matrix form of power flow equations in Example \ref{ex: pf eqs}} \label{detailedeq}
These equations can be written in matrix form as 
\begin{align*}
    P_1 &= \begin{bmatrix}
        x_0 \\ x_1 \\ x_2 \\ y_0 \\ y_1 \\ y_2
    \end{bmatrix}^T \begin{bmatrix}
        0 & 0 & 0 &0 & \frac{b_{01}}{2} & 0  \\
        0 & 0 & 0 & - \frac{b_{01}}{2} & 0 & - \frac{b_{12}}{2} \\ 
        0 & 0 & 0 & 0 & \frac{b_{12}}{2} & 0 \\
         0 & - \frac{b_{01}}{2} & 0 & 0 & 0 & 0 \\
         \frac{b_{01}}{2} & 0 & \frac{b_{12}}{2} & 0 & 0 & 0 \\
         0 &  - \frac{b_{12}}{2} & 0 & 0 & 0 & 0
    \end{bmatrix} \begin{bmatrix}
        x_0 \\ x_1 \\ x_2 \\ y_0 \\ y_1 \\ y_2
    \end{bmatrix} \\
    P_2 &= \begin{bmatrix}
        x_0 \\ x_1 \\ x_2 \\ y_0 \\ y_1 \\ y_2
    \end{bmatrix}^T \begin{bmatrix}
        0 & 0 & 0 &0 & 0 & 0  \\
        0 & 0 & 0 & 0 & 0 &  \frac{b_{12}}{2} \\ 
        0 & 0 & 0 & 0 & -\frac{b_{12}}{2} & 0 \\
         0 & 0 & 0 & 0 & 0 & 0 \\
        0 & 0 & -\frac{b_{12}}{2} & 0 & 0 & 0 \\
         0 &   \frac{b_{12}}{2} & 0 & 0 & 0 & 0
    \end{bmatrix} \begin{bmatrix}
        x_0 \\ x_1 \\ x_2 \\ y_0 \\ y_1 \\ y_2
    \end{bmatrix} \\
    1 &= \begin{bmatrix}
        x_0 \\ x_1 \\ x_2 \\ y_0 \\ y_1 \\ y_2
    \end{bmatrix}^T \begin{bmatrix}
        0 & 0 & 0 & 0 & 0 & 0 \\
        0 & 1 & 0 & 0 & 0 & 0 \\
        0 & 0 & 0 & 0 & 0 & 0 \\
        0 & 0 & 0 & 0 & 0 & 0 \\
        0 & 0 & 0 & 0 & 1 & 0 \\
        0 & 0 & 0 & 0 & 0 & 0
    \end{bmatrix}\begin{bmatrix}
        x_0 \\ x_1 \\ x_2 \\ y_0 \\ y_1 \\ y_2
    \end{bmatrix} \\
    1 &= \begin{bmatrix}
        x_0 \\ x_1 \\ x_2 \\ y_0 \\ y_1 \\ y_2
    \end{bmatrix}^T \begin{bmatrix}
        0 & 0 & 0 & 0 & 0 & 0 \\
        0 & 0 & 0 & 0 & 0 & 0 \\
        0 & 0 & 1 & 0 & 0 & 0 \\
        0 & 0 & 0 & 0 & 0 & 0 \\
        0 & 0 & 0 & 0 & 0 & 0 \\
        0 & 0 & 0 & 0 & 0 & 1
    \end{bmatrix}\begin{bmatrix}
        x_0 \\ x_1 \\ x_2 \\ y_0 \\ y_1 \\ y_2
    \end{bmatrix} \\
    1 &= \begin{bmatrix}
        x_0 \\ x_1 \\ x_2 \\ y_0 \\ y_1 \\ y_2
    \end{bmatrix}^T \begin{bmatrix}
        1 & 0 & 0 & 0 & 0 & 0 \\
        0 & 0 & 0 & 0 & 0 & 0 \\
        0 & 0 & 0 & 0 & 0 & 0 \\
        0 & 0 & 0 & 1 & 0 & 0 \\
        0 & 0 & 0 & 0 & 0 & 0 \\
        0 & 0 & 0 & 0 & 0 & 0
    \end{bmatrix}\begin{bmatrix}
        x_0 \\ x_1 \\ x_2 \\ y_0 \\ y_1 \\ y_2
    \end{bmatrix} \\
    1 &= \begin{bmatrix}
        x_0 \\ x_1 \\ x_2 \\ y_0 \\ y_1 \\ y_2
    \end{bmatrix}^T \begin{bmatrix}
        1 & 0 & 0 & 0 & 0 & 0 \\
        0 & 0 & 0 & 0 & 0 & 0 \\
        0 & 0 & 0 & 0 & 0 & 0 \\
        0 & 0 & 0 & 0 & 0 & 0 \\
        0 & 0 & 0 & 0 & 0 & 0 \\
        0 & 0 & 0 & 0 & 0 & 0
    \end{bmatrix}\begin{bmatrix}
        x_0 \\ x_1 \\ x_2 \\ y_0 \\ y_1 \\ y_2
    \end{bmatrix}
\end{align*}

\end{document}